\newtheorem {definition} {\bf Definition}[section]
\newtheorem {lemma} {\bf Lemma}
\newtheorem {theorem} {\bf Theorem}[section]
\begin{document}

\begin{frontmatter}

%% Title, authors and addresses

%% use the tnoteref command within \title for footnotes;
%% use the tnotetext command for the associated footnote;
%% use the fnref command within \author or \address for footnotes;
%% use the fntext command for the associated footnote;
%% use the corref command within \author for corresponding author footnotes;
%% use the cortext command for the associated footnote;
%% use the ead command for the email address,
%% and the form \ead[url] for the home page:
%%
%% \title{Title\tnoteref{label1}}
%% \tnotetext[label1]{}
%% \author{Name\corref{cor1}\fnref{label2}}
%% \ead{email address}
%% \ead[url]{home page}
%% \fntext[label2]{}
%% \cortext[cor1]{}
%% \address{Address\fnref{label3}}
%% \fntext[label3]{}

\title{Incremental Unsupervised Feature Selection for Dynamic Incomplete Multi-view Data}
% Dynamic Maintenance of Rough Fuzzy Approximations with the Variation of Objects and Attributes
%\tnotetext[NSFC]{This work is supported by the National Science Foundation of China
%(No.~60873108).}

%% use optional labels to link authors explicitly to addresses:
%% \author[label1,label2]{<author name>}
%% \address[label1]{<address>}
%% \address[label2]{<address>}

\author[SWUFE]{Yanyong Huang   }
\ead{huangyy@swufe.edu.cn}
\author[SWUFE]{Kejun~Guo}
\ead{guokejun001@163.com}
\author[JD]{Xiuwen Yi }
\ead{xiuwenyi@foxmail.com}
\author[MinNan,Hagen]{Zhong Li}
\ead{zhong.li@fernuni-hagen.de}
\author[SWJTU1]{Tianrui Li \corref{cor1}}
\ead{trli@swjtu.edu.cn}

\cortext[cor1]{Corresponding author.}
\address[SWUFE]{School of Statistics, Southwestern University of Finance and Economics, Chengdu 611130, China}
\address[JD]{JD Intelligent Cities Research and JD Intelligent Cities Business Unit, Beijing, 100176, China}
\address[MinNan]{School of Mathematics and Statistics, Minnan Normal University, Zhangzhou 363000, China}
\address[Hagen]{Faculty of Mathematics and Computer Science, FernUniversit\"{a}t in Hagen, Hagen 58097, Germany}
\address[SWJTU1]{School of Computing and Artificial Intelligence, Southwest Jiaotong University, Chengdu 611756, China}

\begin{abstract}
Multi-view unsupervised feature selection has been proven to be efficient in reducing the dimensionality of multi-view unlabeled data with high dimensions. The previous methods assume all of the views are complete. However, in real applications, the multi-view data are often incomplete, \emph{i.e.}, some views of instances are missing, which will result in the failure of these methods. Besides, while the data arrive in form of streams, these existing methods will suffer the issues of high storage cost and expensive computation time. To address these issues, we propose an Incremental Incomplete Multi-view Unsupervised Feature Selection method (I$^2$MUFS) on incomplete multi-view streaming data. By jointly considering the consistent and complementary information across different views, I$^2$MUFS embeds the unsupervised feature selection into an extended weighted non-negative matrix factorization model, which can learn a consensus clustering indicator matrix and fuse different latent feature matrices with adaptive view weights. Furthermore, we introduce the incremental leaning mechanisms to develop an alternative iterative algorithm, where the feature selection matrix is incrementally updated, rather than recomputing on the entire updated data from scratch. A series of experiments are conducted to verify the effectiveness of the proposed method by comparing with several state-of-the-art methods. The experimental results demonstrate the effectiveness and efficiency of the proposed method in terms of the clustering metrics and the computational cost.
\end{abstract}

\begin{keyword}
 Feature Selection   \sep Incremental Learning  \sep Dynamic Incomplete Multi-view Data \sep  Adaptive View Fusion.

%% keywords here, in the form: keyword \sep keyword

%% MSC codes here, in the form: \MSC code \sep code
%% or \MSC[2008] code \sep code (2000 is the default)
\end{keyword}

\end{frontmatter}

%%
%% Start line numbering here if you want
%%
% \linenumbers

\section{Introduction}
With the increase of the ability to obtain data, a great amount of data can be collected from multiple sources or described by different feature sets. For instance, an image is characterized by different heterogenous features including visual features and text representation~\cite{rubino20173d}; the same news are reported by different languages in numerous countries~\cite{kim2010multi}. This type of data is usually called multi-view data. In many real applications, obtaining a large number of labeled data is difficulty or even infeasible~\cite{cai2010unsupervised, Tang:Unsupervised}. Multi-view unlabeled data are usually equipped with high-dimensional feature space, which will result in the problem of ``curse of dimensionality''~\cite{bellman:dynamic} and deteriorate the performance for the subsequent data mining or machine learning tasks, like clustering and information retrieval~\cite{zhang2019unsupervised, FAN2021MultiLabel}. Hence, how to efficiently reduce the dimensionality of multi-view unlabeled data has become an important and practical problem.

Multi-view unsupervised feature selection (MUFS) solves this problem by choosing a compact set of representative features from different heterogenous features. Many related methods have been developed in recent years. These methods mainly fall into two categories. The first is that concatenating multiple feature sets deriving from different views into a single-view and making use of the typical single-view unsupervised feature selection on the combined data~\cite{li2012NDFS,UDFS}. The second is that dealing with multi-view data for feature selection directly~\cite{Rui2019Feature, Sun2018Multi, wang2013multi}. These approaches usually learned the local structure of data to perform feature selection.

The aforementioned methods assume all of views are complete, namely, each data sample exists in all views. However, in practical applications, not all samples appear in each view~\cite{sampleAppears,Wen2019UnifiedEA}. For example, in a three-view image dataset with color texture, shape and textual features, some images are only described by a part of three features. Since the incompleteness of views, these existing MUFS methods could not be applied directly. It brings the challenge how to select features by the utilization of consistent and complementary information under the incomplete scenario. Furthermore, in a dynamic data environment, the data arrive in a streaming fashion. These existing MUFS methods based on the full batch manner need load all data at once, which will lead to the high storage cost and expensive computation time. In order to deal with these issues, we propose an Incremental Incomplete Multi-view Unsupervised Feature Selection method (I$^2$MUFS) on dynamic incomplete multi-view  data. Specifically, the proposed I$^2$MUFS method integrates unsupervised feature selection into an extended weighted non-negative matrix factorization (WNMF) based clustering objective function with graph regularization. By employing the consistent information across different views, I$^2$MUFS can learn a consensus clustering indicator matrix. And it fuses the latent feature matrices with adaptive view weights based on the complementarity among views. Furthermore, while the incomplete multi-view data arrive chunk-by-chunk, an incremental algorithm of I$^2$MUFS is developed for the update of feature selection matrices by utilizing  the previous computational results instead of recomputing in the whole data set from scratch.

The main contribution of this paper can be summarized as follows. (\romannumeral1) By jointly considering the consistent and complementary information across different views, we propose a novel multi-view unsupervised feature selection method for incomplete multi-view streaming data, which integrates the learning of consensus indicator matrix and the fusion of different latent feature matrices with adaptive view weights into an extended non-negative matrix factorization model. The learned consistent and complementary information can enhance the performance of feature selection. (\romannumeral2) By introducing the incremental learning mechanism, we develop an iterative optimization algorithm, which can greatly improve the computational efficiency of feature selection. We also give the theoretical proof to guarantee its convergence. (\romannumeral3) Experimental results show the effectiveness of the proposed method by comparing with the state-of-the-art (SOTA) methods in terms of the clustering metrics and the computational cost.

The rest of this paper is organized as follows. In Section 2, some related multi-view unsupervised feature selection methods are briefly reviewed. The detail of the proposed method is presented in Section 3. Section 4 introduces the alternative iterative optimization algorithm to solve the proposed objective function. Section 5 gives the theoretical analysis of the proposed method including the convergence and tim complexity analysis. Extensive experiments are conducted to demonstrate the effectiveness and efficiency of our method in Section 6. Section 7 concludes the paper.

\section{Related Works}
In this section, we briefly review some representative unsupervised feature selection methods. Feature selection, also known as feature (variable) reduction, aims to select a subset of  representative and important features from the whole feature space~\cite{FSBook2015,FSBook2008}. According to whether or not labels are provided in the procedure of selecting features, feature selection can be classified into three categories, namely, supervised methods~\cite{FS2020}, semi-supervised methods~\cite{FS2017} and unsupervised methods~\cite{S2019A}. In this paper, we focus on unsupervised feature selection. There are two kinds of unsupervised feature selection methods for multi-view data. The first one uses the unsupervised single-view feature selection method on the combined data, where multi-view features are concatenated to a single-view feature. The single-view unsupervised feature selection method usually is classified into three categories: filter-based methods, wrapper-based methods and embedded-based methods. The filter-based feature selection methods use different evaluation criterions to measure the importance of selected features. LapScore \cite{He:Laplacian} proposes a metric called Laplacian Score to measure the capability of local similarity preserving with regards to different features. Gu et al. employed the generalized Fisher score to select features~\cite{gu2012generalized}. The filter-based method did not consider the follow-up learning algorithms, which could not achieve the satisfied performance. The wrapper-based feature selection methods input the subsets of features into some certain learning algorithms until obtaining the best performance.   Dy and Brodley used the expectation-maximization clustering algorithm and two different metrics (scatter separability and maximum likelihood) for the feature search and selection~\cite{Dy2004Feature}. Based on the support vector machines with kernel functions, Maldonado et al. proposed a wrapper-based method to choose features in terms of the number of errors in the validation subset~\cite{Maldonado2009wrapper}. Since the wrapper-based method need use the learning algorithm to train each selected feature, it will result in the expensive time cost and could not be applied in large-scale machine learning task. The embedded-based feature selection methods incorporate the process of  feature selection into some certain learning tasks, such as clustering and dimension reduction. Hou et al. presented a novel framework for unsupervised feature selection by combining the embedding learning and sparse regression~\cite{Hou2013Joint}. Liu et al. employed the locally linear embedding method to obtain feature weight matrix and imposed $\ell_{1}$-norm on the reconstruction term to select the robust features~\cite{Liu2020Robust}. Shi et al. embedded the unsupervised feature selection into a robust spectral regression model with sparse constraint~\cite{RSFS}. The aforementioned single-view unsupervised feature selection methods deem that different views' features are independent and ignore the underlying relations among different views. They could not effectively cope with the feature selection of unlabeled multi-view data.

Different from the first kind of MUFS method by concatenating different views' features together, the other one for multi-view unsupervised feature selection is to deal with multi-view data directly and discuss the interaction information among multiple views. Hou et al. presented a novel multi-view feature selection framework by adaptively learning the common similarity matrix across different views, rather than using the fixed  graph structure~\cite{Hou:Multi}.  By the weight combination of different views' similarity matrices, Dong et al. proposed a multi-view feature selection method by learning a collaborative similarity structure and integrating it into a sparse regression model~\cite{ACSL}. Tan et al. embedded the feature selection into a clustering process by learning a consensus indicator matrix and fusing the indicator matrices of different views with adaptive view-weights~\cite{CGMV-UFS}. Wan et al. presented an embedding multi-view unsupervised feature selection method by simultaneously learning the local and global structures among different views~\cite{ASE}. Zhao et al. integrated the process of feature selection into the multi-non-negative matrix factorization for learning both the view-specific features and the common features of different views~\cite{CoUFC}. However, the aforementioned  methods have a common assumption that all views are complete, which results in that they could not be directly applied in the incomplete multi-view unlabeled data. Moreover, the data will change over time in the dynamic environment. These methods based on the full batch manner will suffer from the high storage cost and expensive computation time. Although OMVFS~\cite{OMVFS} presents an incremental method for feature selection on multi-view data with complete views, it is still confronted with at least two issues, i.e., it is not suitable for incomplete multi-view data and the importance of  different views does not take into account. Hence, by the utilization of the consistent and complementary information among different views, we propose a novel multi-view unsupervised feature selection method I$^2$MUFS for the incomplete multi-view streaming data. Besides, we also provide an efficient method to solve the proposed objective function by introducing the incremental mechanisms.

\section{The Proposed Method}
\subsection{Notations and Definitions}
Throughout the paper, all the matrices are denoted as boldface capital letters. For a matrix $\textbf{V} \in R^{d \times k}$,  $\textbf{V}_{ij}$ denotes its $(i,j)$-th entry, $\textbf{V}(i, :)$ and $\textbf{V}(:, j)$ denote its $i$-th row and $j$-th column, respectively. $Tr(\textbf{V})$ is the trace of $\textbf{V}$ and $\textbf{V}^{T}$ is the transpose of $\textbf{V}$. The Frobenius norm of $\textbf{V}$ is defined as $\sqrt{\sum_{i=1}^{d} \sum_{j=1}^{k} \textbf{V}_{ij}^{2}}$. $\|\textbf{V}\|_{2,1}=\sum_{i=1}^{d} \sqrt {\sum_{j=1}^{k} \textbf{V}_{ij}^{2}}$ represents the $\ell_{2,1}$ norm of $\textbf{V}$.

In the dynamic data environment, the data arrive chunk by chunk. Suppose that the incomplete multi-view data $\textbf{X}_t=[\textbf{X}_t^{(1)};\textbf{X}_t^{(2)};\cdots;\textbf{X}_t^{(n_v)}]\in R_{+}^{d \times N_{t}}$ arrive at time $t$, where $\textbf{X}_t^{(v)} \in R_{+}^{d_{v} \times N_{t}}$ denotes the $v$-th view data, $d_v$ is the dimensionality of $v$-th view such that $\sum\limits_{v=1}^{{{n}_{v}}}{{{d}_{v}}}=d$ and $N_t$ is the number of instances in the $t$-th chunk. The objective of incomplete multi-view feature selection is to select the discriminative features from the arrived data $[\textbf{X}_1,\textbf{X}_2,\cdots,\textbf{X}_T]$ up to time $T$.

\subsection{Preliminaries}
Non-negative Matrix Factorization (NMF) is widely used in unsupervised learning including dimension reduction and clustering~\cite{NIPS2000nmf}. Let $\textbf{X}\in R_{+}^{d \times N}$ denote the input non-negative data matrix, where each column indicates an instance with the $d$-dimensional features. NMF  aims to approximately factorize the data matrix $\textbf{X}$  into two low-rank non-negative matrices, which can be formulated as follows:
\begin{align}\label{EqNMF}
&\min_{\textbf{U},\textbf{V}} \| (\textbf{X}-\textbf{V}{\textbf{U}}^{T})  \|_{F}^{2} \notag \\
&s.t.~\textbf{V}\ge 0, \textbf{U}\ge 0,
\end{align}
where $\textbf{V} \in R_{+}^{d\times K}$ and $\textbf{U} \in R_{+}^{N \times K}$ denote the latent feature matrix and the clustering indicator matrix, respectively. Eq.~(\ref{EqNMF}) assumes the input data matrix is complete with no missing values. It could not be directly applied in the incomplete data. To cope with this issue, Weighted Non-negative Matrix Factorization (WNMF) is presented by introducing a weighted matrix~\cite{WNMF}. The formulation of WNMF can be described as follows:
\begin{align}\label{EqWNMF}
&\min_{\textbf{U},\textbf{V}} \| (\textbf{X}-\textbf{V}{\textbf{U}^{T})\textbf{W}}  \|_{F}^{2} \notag \\
&s.t.~\textbf{V}\ge 0, \textbf{U}\ge 0,
\end{align}
where $\textbf{W}$ is the weighted matrix and the corresponding matrix entry $\textbf{W}_{ij}=0$, if the entry  $\textbf{X}_{ij}$ is missing, otherwise $\textbf{W}_{ij}=1$. However, it cannot be directly used for the dynamic incomplete multi-view data.  In what follows, we extend WNMF for the feature selection of dynamic incomplete multi-view data and integrate the complementary and consensus information of different views into the extended model.
\subsection{Formulation of I$^2$MUFS}

In the dynamic data environment, assuming that $N_{t}$ incomplete multi-view data samples with $n_v$ views $\{\textbf{X}_t^{(v)} \in R_{+}^{d_{v} \times N_{t}}, v=1,2,\cdots, n_v \}$ arrived at time $t$, where $d_{v}$ represents the feature dimension of the $v$-th view. For each view data matrix $\textbf{X}_t^{(v)}$, we employ WNMF to factorize it, which can be described as follows:
\begin{align}\label{Eq1}
&\min_{\textbf{U}_{t}^{(v)},\textbf{V}^{(v)}}\sum_{v=1}^{n_v}  \| (\textbf{X}_{t}^{(v)}-\textbf{V}^{(v)}{\textbf{U}_{t}^{(v)}}^{T})\textbf{W}_{t}^{(v)}  \|_{F}^{2} \notag \\
&s.t.~\textbf{V}^{(v)}\ge 0, \textbf{U}_t^{(v)}\ge 0,
\end{align}
where $\textbf{V}^{(v)} \in R_{+}^{d_v \times K}$ denotes the latent feature matrix of the $v$-th view, $\textbf{U}_{t}^{(v)} \in R_{+}^{N_t \times K}$ represents the clustering indicator matrix of the $v$-th view and $\textbf{W}_{t}^{(v)} \in R^{N_t \times N_t} $ is a diagonal weight matrix, where the diagonal entry $\textbf{W}_{j_{t}j_{t}}^{(v)}$ describe the weight of the instance $j_{t}$ belonging to the view $v$. In the classical WNMF~\cite{WNMF}, the diagonal entry of $\textbf{W}_{t}^{(v)}$ equals to 0, if the instance is missing, 1 otherwise. This is equivalent to the missing values are filled with 0, which is unreasonable in some real applications. A reasonable method is using the mean values to fill missing values~\cite{de2011handbook}. Since the data arrive consecutively, we could not use the mean value of all data. In what follows, we introduce a weighted filling method by utilization of the available information at present.

To describe the missing instances of different views at time $t$, an indicator matrix $\textbf{M}^{t}\in R^{N_t \times n_v}$ is defined as follows:
\begin{equation}\label{Eq2}
\textbf{M}_{iv}^{t}=
\begin{cases}
1& \text{if the instance $i$ is in the $v$-th view at time $t$;}\\
0& \text{otherwise.}
\end{cases}
\end{equation}
Let $\textbf{X}_{t}^{(v)}(:, j_t)$ indicate the newest arriving instance. If it is missing in the $v$-th view, it is imputed by  the weighted average of up-to-date instances:
\begin{equation}\label{Eq3}
\textbf{X}_{t}^{(v)}(:, j_t)= \sum_{i=1}^{j_t} \frac{\textbf{M}_{iv}^{t}}{n^{v}_{j_{t}}}\textbf{X}_{t}^{(v)}(:, i),
\end{equation}
where $n^{v}_{j_{t}}=\sum_{i=1}^{j_t}\textbf{M}_{iv}^{t}$ indicates the number of the available instances in view $v$ up to now. Moreover, the diagonal entry $\textbf{W}_{j_{t}j_{t}}^{(v)}$ of the weight matrix $\textbf{W}_{t}^{(v)}$ can be computed by following equation:
\begin{equation}\label{Eq4}
\textbf{W}_{j_{t}j_{t}}^{(v)}=
\begin{cases}
1& \text{if  $\textbf{X}_{t}^{(v)}(:, j_t)$ is in the $v$-th view;}\\
n^{v}_{j_{t}}/j_t &  \text{otherwise.}
\end{cases}
,
\end{equation}
which measures the quality of available information until now.

For incomplete multi-view streaming data, Eq.~(\ref{Eq1}) only considers the feature selection of the current data chunk and deem all views equally important. The goal of incomplete multi-view feature selection is to find the feature selection matrix $\textbf{V}^{(v)}$ from the arrived data $[\textbf{X}_1,\textbf{X}_2,\cdots,\textbf{X}_T]$ up to time $T$ by employing the complementary and consensus information of different views. In order to exploit the complementary information, we rewrite the Eq.~(\ref{Eq1}) as follows by selecting the feature matrix  $\textbf{V}^{(v)}$ of each view from all arrived data and fusing them with adaptive view weights to measure the importance of different views.
\begin{align}\label{Eq5}
&\min_{\textbf{U}_{t}^{(v)},\textbf{V}^{(v)},\alpha^{(v)}}\sum_{v=1}^{n_v} (\alpha^{(v)})^{\lambda} \sum_{t=1}^{T}   \| (\textbf{X}_{t}^{(v)}-\textbf{V}^{(v)}{\textbf{U}_{t}^{(v)}}^{T})\textbf{W}_{t}^{(v)}  \|_{F}^{2}  \notag +\eta^{(v)} \| \textbf{V}^{(v)}  \|_{2,1} \notag \\
&s.t. \textbf{U}_t^{(v)}\ge 0, \textbf{V}^{(v)}\ge 0, {\textbf{U}_t^{(v)}}^{T}\textbf{U}_t^{(v)}=\textbf{I},\alpha^{(v)} \ge 0, \sum_{v=1}^{n_v}\alpha^{(v)}=1
\end{align}
where  $\alpha^{(v)}$ is the adaptive weight of the $v$-th view, $\lambda$ is a parameter used to control the smoothness of the weights distribution and the orthogonal constraint of $\textbf{U}_t^{(v)}$ is the relaxed form of the discrete constraint of the clustering indicator matrix $\textbf{U}_{t}^{(v)} \in \{0,1\}^{N_t \times K}$. By imposing row sparsity on $\textbf{V}^{(v)}$ through $\ell_{2,1}$-norm, we can select the features according to the row norms of $\textbf{V}^{(v)}$ in a descending order~\cite{wang2015embedded}.

In addition, considering the consistent information across different views, the clustering indicator matrix $\textbf{U}_t^{(v)}$ of each view  should be closed to the consensus clustering indicator matrix $\textbf{U}_t^{*}$, which is described by the following term.
\begin{align}\label{Eq6}
\min_{\textbf{U}_t^{*}, \textbf{U}_t^{(v)}}  \| \textbf{W}_{t}^{(v)}(\textbf{U}_{t}^{(v)}-\textbf{U}_{t}^{*})  \|_{F}^{2}
\end{align}
Furthermore, motivated by the similar instances should have the similar cluster labels, the following term based on the spectral analysis~\cite{li2013clustering} is appended to the proposed model.
\begin{align}\label{Eq7}
\min_{\textbf{U}_t^{(v)}}Tr({\textbf{U}_t^{(v)}}^{T}\textbf{L}_t^{(v)}\textbf{U}_t^{(v)}),
\end{align}
where $\textbf{L}_t^{(v)}=\textbf{D}_t^{(v)}-\textbf{S}_t^{(v)}$ is the Laplacian matrix of the $v$-th view in the $t$-th data chunk. $\textbf{S}_t^{(v)}$ and $\textbf{D}_t^{(v)}$ denote the similarity matrix and the corresponding degree matrix, respectively.

By putting Eqs.~(\ref{Eq5}), (\ref{Eq6}) and (\ref{Eq7}) together, we have the overall objective function of the proposed method as follows:
\begin{align}\label{Eq8}
&\min_{\textbf{U}_{t}^{(v)}, \textbf{U}_{t}^{*}, \textbf{V}^{(v)} , \alpha^{(v)}}\sum_{v=1}^{n_v}\sum_{t=1}^{T} (  (\alpha^{(v)})^{\lambda} \| (\textbf{X}_{t}^{(v)}-\textbf{V}^{(v)}{\textbf{U}_{t}^{(v)}}^{T})\textbf{W}_{t}^{(v)}  \|_{F}^{2}  +\beta^{(v)} \| \textbf{W}_{t}^{(v)}(\textbf{U}_{t}^{(v)}-\textbf{U}_{t}^{*})  \|_{F}^{2} +
\theta^{(v)}Tr({\textbf{U}_{t}^{(v)}}^{T}\textbf{L}_{t}^{(v)}\textbf{U}_{t}^{(v)}) )  +\eta^{(v)} \| \textbf{V}^{(v)}  \|_{2,1} \notag   \\
&s.t. \textbf{U}_{t}^{(v)}\ge 0, \textbf{V}^{(v)}\ge 0, \textbf{U}_{t}^{*} \ge 0, {\textbf{U}_t^{(v)}}^{T}\textbf{U}_t^{(v)}=\textbf{I} ,\alpha^{(v)} \ge 0, \sum_{v=1}^{n_v}\alpha^{(v)}=1,
\end{align}
where $\beta^{(v)} \ge 0$ and $\theta^{(v)} \ge 0$ are the trade-off hyper-parameters.

As can be seen from Eq.~(\ref{Eq8}), our model employs the extended WNMF to learn different latent feature matrices  across different incomplete views and fuse them with adaptive view weights, which both consider the view-specific features and the complementary information of different views. It is better than learning the unified feature matrix with fixed weights. Besides, by considering  the consensus information of different views, the proposed model learns a consensus clustering indicator matrix and preserves the local geometrical structure of data based on the spectral analysis. These two parts can promote each other to improve the performance of unsupervised feature selection on incomplete multi-view data.

\section{Optimization Procedure}
In Eq.~(\ref{Eq8}), the objective function is not convex to all variables simultaneously. In this section, we introduce an alternative iterative algorithm to solve this problem by optimizing one variable and fixing the rest one. Moreover, an incremental scheme is proposed to improve the computational efficiency.

\subsection{Optimize $\textbf{V}^{(v)}$ with Other Variables Fixed}
While $\textbf{U}_{t}^{(v)}$, $\textbf{U}_t^{*}$ and $\alpha^{(v)}$ are fixed at current time $t$, the optimization of $\textbf{V}^{(v)}$ is independent for different $v$. Then, the optimization problem in Eq.~(\ref{Eq8}) is reduced as follows:
\begin{align}
&\min_{\textbf{V}^{(v)}} (\alpha^{(v)})^{\lambda} \| (\textbf{X}_{t}^{(v)}-\textbf{V}^{(v)}{\textbf{U}_{t}^{(v)}}^{T})\textbf{W}_{t}^{(v)}  \|_{F}^{2} +\eta^{(v)} \| \textbf{V}^{(v)}  \|_{2,1} \notag \\
&s.t. \textbf{V}^{(v)}\ge 0.\label{NewEq1}
\end{align}

By means of  the Lagrange multiplier method, we can obtain the following Lagrange function:
\begin{align}\label{Eq9}
&J(\textbf{V}^{(v)}) = (\alpha^{(v)})^{\lambda} \sum_{i=1}^{t}  \| (\textbf{X}_{i}^{(v)}-\textbf{V}^{(v)}{\textbf{U}_{i}^{(v)}}^{T})\textbf{W}_{i}^{(v)}  \|_{F}^{2}+\eta^{(v)} \| \textbf{V}^{(v)}  \|_{2,1}+Tr(\bm{\Phi}^T{\textbf{V}^{(v)}}),
\end{align}
where $\bm{\Phi}$ is the Lagrange multiplier matrix for the constraint $\textbf{V}^{(v)}\ge 0$. Taking the derivative of $J(\textbf{V}^{(v)})$ w.r.t. $\textbf{V}^{(v)}$, we have
\begin{align}
&\frac{\partial J(\textbf{V}^{(v)})}{\partial \textbf{V}^{(v)}} = (\alpha^{(v)})^{\lambda}(2\textbf{V}^{(v)} \sum_{i=1}^{t}{\textbf{U}_{i}^{(v)}}^{T}\tilde{\textbf{W}}_{i}^{(v)}\textbf{U}_{i}^{(v)}-2\sum_{i=1}^{t}\textbf{X}_{i}^{(v)}\tilde{\textbf{W}}_{i}^{(v)}\textbf{U}_{i}^{(v)} )+\eta^{(v)}\textbf{H}^{(v)}\textbf{V}^{(v)}+\bm{\Phi},
\end{align}
where $\textbf{H}^{(v)} = diag(\frac{1}{ \| \textbf{V}^{(v)}(d_{H},:) \|_{2}+\varepsilon}),$ $\varepsilon$ is a small positive value, $d_{H}=1, 2, ..., d_{v}$ and $\tilde{\textbf{W}}_{i}^{(v)} = \textbf{W}_{i}^{(v)}{\textbf{W}_{i}^{(v)}}^{T}$.

For the convenience of following discussion, let $\textbf{R}_{t}^{(v)} \triangleq \sum_{i=1}^{t}{\textbf{U}_{i}^{(v)}}^{T}\tilde{\textbf{W}}_{i}^{(v)}\textbf{U}_{i}^{(v)}$ and $\textbf{Q}_{t}^{(v)} \triangleq \sum_{i=1}^{t}\textbf{X}_{i}^{(v)}\tilde{\textbf{W}}_{i}^{(v)}\textbf{U}_{i}^{(v)}$. Then, while a new chunk arrives, $\textbf{R}_{t}^{(v)}$ and $\textbf{Q}_{t}^{(v)}$ can be computed as follows:
\begin{equation}\label{Eq11}
\begin{aligned}
\textbf{R}_{t}^{(v)} &= \textbf{R}_{t-1}^{(v)} + {\textbf{U}_{t}^{(v)}}^{T}\tilde{\textbf{W}}_{t}^{(v)}\textbf{U}_{t}^{(v)} \\
\textbf{Q}_{t}^{(v)} &= \textbf{Q}_{t-1}^{(v)} +\textbf{X}_{t}^{(v)}\tilde{\textbf{W}}_{t}^{(v)}\textbf{U}_{t}^{(v)}
\end{aligned}
\end{equation}
By using the previous computational results, Eq.~(\ref{Eq11}) can incrementally update $\textbf{R}_{t}^{(v)}$ and $\textbf{Q}_{t}^{(v)}$ without recomputing in the whole data set from scratch.

According to the Karush-Kuhn-Tucker (KKT) complementary condition $\bm{\Phi}_{rs}\textbf{V}_{rs}^{(v)}=0$, $\textbf{V}_{rs}^{(v)}$ can be updated by following rule:
\begin{equation}\label{Eq12}
\begin{aligned}
\textbf{V}_{rs}^{(v)} \gets \textbf{V}_{rs}^{(v)}\sqrt{\frac{2(\alpha^{(v)})^{\lambda} [\textbf{Q}_{t}^{(v)}]_{rs}}{[2(\alpha^{(v)})^{\lambda}\textbf{V}^{(v)}\textbf{R}_{t}^{(v)}+\eta^{(v)}\textbf{H}^{(v)}\textbf{V}^{(v)}]_{rs}} }
\end{aligned}
\end{equation}

\subsection{Optimize $\textbf{U}_{t}^{(v)}$ with Other Variables Fixed}
The optimization of $\textbf{U}_{t}^{(v)}$ is independent for different $v$ while $\textbf{V}^{(v)}$, $\textbf{U}_t^{*}$ and $\alpha^{(v)}$ are fixed. Hence, $\textbf{U}_{t}^{(v)}$ can be updated by solving the following optimization problem:
\begin{align}
&\min_{\textbf{U}_{t}^{(v)}}(\alpha^{(v)})^{\lambda} \| (\textbf{X}_{t}^{(v)}-\textbf{V}^{(v)}{\textbf{U}_{t}^{(v)}}^{T})\textbf{W}_{t}^{(v)}  \|_{F}^{2} +\beta^{(v)} \| \textbf{W}_{t}^{(v)}(\textbf{U}_{t}^{(v)}-\textbf{U}_{t}^{*})  \|_{F}^{2} +
\theta^{(v)}Tr({\textbf{U}_{t}^{(v)}}^{T}\textbf{L}_{t}^{(v)}\textbf{U}_{t}^{(v)})  \notag   \\
&s.t. \textbf{U}_{t}^{(v)}\ge 0, {\textbf{U}_t^{(v)}}^{T}\textbf{U}_t^{(v)}=\textbf{I}.\label{NewEq2}
\end{align}

Then, we further construct the following Lagrange function:
\begin{align}
J(\textbf{U}_t^{(v)}) =&(\alpha^{(v)})^{\lambda}  \| (\textbf{X}_{t}^{(v)}-\textbf{V}^{(v)}{\textbf{U}_{t}^{(v)}}^{T})\textbf{W}_{t}^{(v)}  \|_{F}^{2} +\beta^{(v)} \| \textbf{W}_{t}^{(v)}(\textbf{U}_{t}^{(v)}-\textbf{U}_{t}^{*})  \|_{F}^{2}  +\theta^{(v)}Tr({\textbf{U}_{t}^{(v)}}^{T}\textbf{L}_{t}^{(v)}\textbf{U}_{t}^{(v)}) \notag \\
&+\xi^{(v)}  \| {\textbf{U}_{t}^{(v)}}^{T}\textbf{U}_{t}^{(v)}-\textbf{I}  \|_{F}^{2} + Tr(\bm{\Psi}^{T}{\textbf{U}_t^{(v)}}),
\end{align}
where $\xi^{(v)}$ and $\bm{\Psi}$ are the Lagrange multipliers for the orthogonal constraint and non-negative constraint, respectively. To guarantee the orthogonality satisfied, $\xi^{(v)}$ is set to the large constant in practice.

Then the partial derivative of $J(\textbf{U}_t^{(v)})$ w.r.t. $\textbf{U}_{t}^{(v)}$ is
\begin{align}
\frac{\partial J}{\partial \textbf{U}_{t}^{(v)}}  = & (\alpha^{(v)})^{\lambda}(2\tilde{\textbf{W}}_{t}^{(v)}\textbf{U}_{t}^{(v)}{\textbf{V}^{(v)}}^T\textbf{V}^{(v)}-2\tilde{\textbf{W}}_{t}^{(v)}{\textbf{X}_{t}^{(v)}}^{T}\textbf{V}^{(v)} ) +2\beta^{(v)}\tilde{\textbf{W}}_{t}^{(v)}(\textbf{U}_{t}^{(t)}-\textbf{U}_{t}^{*}) +2\theta^{(v)}\textbf{L}_{t}^{(v)}\textbf{U}_{t}^{(v)}  \notag \\
&+ 4\xi^{(v)}(\textbf{U}_{t}^{(v)}{\textbf{U}_{t}^{(v)}}^{T}\textbf{U}_{t}^{(v)}-\textbf{U}_{t}^{(v)})+\bm{\Psi}
\end{align}
By the utilization of KKT complementary condition $\bm{\Psi}_{rs}[\textbf{U}_{t}^{(v)}]_{rs}=0$, we can obtain the following updating rule:
\begin{align}\label{Eq15}
[\textbf{U}_{t}^{(v)}]_{rs}  \gets  [\textbf{U}_{t}^{(v)}]_{rs}\sqrt{\frac{[\textbf{G}_{t}^{(v)}]_{rs}}
{[\textbf{P}_{t}^{(v)}]_{rs}}}~,
\end{align}
where
\begin{align}
\textbf{G}_{t}^{(v)} &= (\alpha^{(v)})^{\lambda}{\tilde{\textbf{W}}_{t}^{(v)}\textbf{X}_{t}^{(v)}}^{T}\textbf{V}^{(v)}+\beta^{(v)}\tilde{\textbf{W}}_{t}^{(v)}\textbf{U}_{t}^{*} +2\xi^{(v)}\textbf{U}_{t}^{(v)}+\textbf{Z}_{t}^{-}\textbf{U}_{t}^{(v)},
\end{align}
\begin{align}
\textbf{P}_{t}^{(v)} &=(\alpha^{(v)})^{\lambda}\tilde{\textbf{W}}_{t}^{(v)}\textbf{U}_{t}^{(v)}{\textbf{V}^{(v)}}^{T}\textbf{V}^{(v)} + \beta^{(v)}\tilde{\textbf{W}}_{t}^{(v)}\textbf{U}_{t}^{(v)}+\textbf{Z}_{t}^{+}\textbf{U}_{t}^{(v)}+2\xi^{(v)}\textbf{U}_{t}^{(v)}{\textbf{U}_{t}^{(v)}}^{T}\textbf{U}_{t}^{(v)},
\end{align}
and
\begin{align}
&\textbf{Z}_{t}=\textbf{Z}_{t}^{+}-\textbf{Z}_{t}^{-}=\theta^{(v)}\textbf{L}_{t}^{(v)}, [\textbf{Z}_{t}^{+}]_{rs}=\frac{1}{2}(|[\textbf{Z}_{t}]_{rs}|+[\textbf{Z}_{t}]_{rs}),[\textbf{Z}_{t}^{-}]_{rs}=\frac{1}{2}(|[\textbf{Z}_{t}]_{rs}|-[\textbf{Z}_{t}]_{rs}).
\end{align}

\subsection{Optimize $\textbf{U}_t^{*}$ with Other Variables Fixed}
While $\textbf{U}_{t}^{(v)}$, $\textbf{V}^{(v)}$ and $\alpha^{(v)}$ are fixed, the optimization problem in Eq.~(\ref{Eq8}) becomes
\begin{align}
&\min_{\textbf{U}_{t}^{*}}\sum_{v=1}^{n_v}(  \beta^{(v)} \| \textbf{W}_{t}^{(v)}(\textbf{U}_{t}^{(v)}-\textbf{U}_{t}^{*})  \|_{F}^{2}~s.t.~\textbf{U}_{t}^{*} \ge 0.\label{NewEq3}
\end{align}

Let $J(\textbf{U}_t^{*})=\sum_{v=1}^{n_v} \beta^{(v)} \| \textbf{W}_{t}^{(v)}(\textbf{U}_{t}^{(v)}-\textbf{U}_{t}^{*}) \|_{F}^{2}$. Taking the derivative of $J(\textbf{U}_t^{*})$ w.r.t $\textbf{U}_t^{*}$ and setting it to 0, we can obtain the solution for $\textbf{U}_t^{*}$ as follows:
\begin{align}\label{Eq20}
\textbf{U}_{t}^* = (\sum_{v=1}^{n_{v}}\beta^{(v)}\tilde{\textbf{W}}_{t}^{(v)})^{-1}\sum_{v=1}^{n_{v}}\beta^{(v)}\tilde{\textbf{W}}_{t}^{(v)}\textbf{U}_{t}^{(v)},
\end{align}
which satisfies the non-negative constraint.
\subsection{Optimize $\alpha^{(v)}$ with Other Variables Fixed}
While $\textbf{V}^{(v)}$, $\textbf{U}_{t}^{(v)}$ and $\textbf{U}_t^{*}$ are fixed, we can update $\alpha^{(v)}$ by solving the following problem:
\begin{align}
&\min_{\alpha^{(v)}}\sum_{v=1}^{n_v} \sum_{t=1}^{T} (\alpha^{(v)})^{\lambda} \| (\textbf{X}_{t}^{(v)}-\textbf{V}^{(v)}{\textbf{U}_{t}^{(v)}}^{T})\textbf{W}_{t}^{(v)}  \|_{F}^{2}  \notag \\
&s.t. \alpha^{(v)} \ge 0, \sum_{v=1}^{n_v}\alpha^{(v)}=1.\label{NewEq4}
\end{align}

Let $L_{T}^{(v)}\triangleq \sum_{t=1}^{T} \| (\textbf{X}_{t}^{(v)}-\textbf{V}^{(v)}{\textbf{U}_{t}^{(v)}}^{T})\textbf{W}_{t}^{(v)}  \|_{F}^{2}$. Then, $L_{T}^{(v)}=L_{T-1}^{(v)}+\| (\textbf{X}_{t}^{(v)}-\textbf{V}^{(v)}{\textbf{U}_{t}^{(v)}}^{T})\textbf{W}_{t}^{(v)}  \|_{F}^{2}$. Namely, $L_{T}^{(v)}$ can be computed incrementally. If we first ignore the constraint $\alpha^{(v)} \ge 0$, the Lagrange function is constructed as follows:
\begin{align}
&J(\alpha) =\sum_{v=1}^{n_v} (  (\alpha^{(v)})^{\lambda}L_{T}^{(v)}-\varphi(\sum_{v=1}^{n_v}\alpha^{(v)}-1),
\end{align}
where $\varphi$ is the Lagrange multiplier.

Taking the derivative of $J(\alpha)$ w.r.t. $\alpha^{(v)}$ and setting it to zero, it is easy to obtain
\begin{equation}\label{NewEq5}
\lambda(\alpha^{(v)})^{\lambda-1}L_{T}^{(v)}-\varphi=0.
\end{equation}

Then, we have $\alpha^{(v)}=(\frac{\varphi}{\lambda L_{T}^{(v)}})^{\frac{1}{\lambda-1}}$. Since $\sum_{v=1}^{n_v}\alpha^{(v)}=1$, we can obtain the solution for $\alpha^{(v)}$ as follows:
\begin{align}\label{Eq21}
\alpha^{(v)} = \frac{(L_{T}^{(v)})^{\frac{1}{1-\lambda}}}{\sum_{v=1}^{n_v}(L_{T}^{(v)})^{\frac{1}{1-\lambda}}}.
\end{align}
Since $L_{T}^{(v)}>0$, it is easy to verify that the solution in Eq.~\ref{Eq21} satisfies the non-negative constraint.

The overall optimization procedure for I$^2$MUFS is summarized in Algorithm 1.

\begin{algorithm}[h]
  \caption{Iterative algorithm of I$^2$MUFS}
  \label{alg:algorithm}
  \textbf{Input}: The incomplete multi-view data chunks $\textbf{X}_{t}^{(v)}~(t=1,2,\cdots,T; v=1,2,\cdots,n_{v})$, the number of clusters $K$, parameters $\lambda$,
  $\beta^{(v)}$, $\theta^{(v)}$ and $\eta^{(v)}~(v=1,2,\cdots,n_{v})$. \\
  \textbf{Output}: Feature selection matrix $\textbf{V}^{(v)}~(v=1,2,\cdots,n_{v})$.
  \begin{algorithmic}[1]
    \State \textbf{Initialization}: Randomly initilize $\textbf{V}^{(v)}$, $\alpha^{(v)}=\frac{1}{n_{v}}$, $\textbf{R}_{0}^{(v)}=0$, $\textbf{Q}_{0}^{(v)}=0$ for each view.	
    \For{$t=1$ to $T$}
     	 \For{$v=1$ to $n_{v}$}
	 	\State Fill in missing values of $\textbf{X}_{t}^{(v)}$ according to Eq.~(\ref{Eq3}).
		\State Construct weight matrix $\textbf{W}_{t}^{(v)}$ based on Eq.~(\ref{Eq4}).
	        \State Initialize $\textbf{U}_{t}^{(v)}$.
	\EndFor
		
    \Repeat
    	\For {$v=1$ to $n_{v}$}
      		\State Update $\textbf{V}^{(v)}$ according to Eq.~(\ref{Eq12}).
      		\State Update $\textbf{U}_{t}^{(v)}$ according to Eq.~(\ref{Eq15}).
      	\EndFor
       \State Compute $\textbf{U}_{t}^{*}$ according to Eq.~(\ref{Eq20}).
       \State Compute $\alpha^{(v)}$ in terms of Eq.~(\ref{Eq21}).
     \Until{\textit{Convergence}}
      \State Incremental update $\textbf{R}_{t}^{(v)}$ and $\textbf{Q}_{t}^{(v)}$ based on Eq.~(\ref{Eq11}).
      \State $\textbf{R}_{t}^{(v)} = \textbf{R}_{t-1}^{(v)} + {\textbf{U}_{t}^{(v)}}^{T}\tilde{\textbf{W}}_{t}^{(v)}\textbf{U}_{t}^{(v)}$
	  \State $\textbf{Q}_{t}^{(v)} = \textbf{M}_{t-1}^{(v)} +\textbf{X}_{t}^{(v)}\tilde{\textbf{W}}_{t}^{(v)}\textbf{U}_{t}^{(v)}$
    \EndFor
   \State Rank $\|\textbf{V}^{(v)}(i,:)\|_{2}~(i=1,2,\cdots,d_{v})$ in a descending order and select the top $l$ features.
  \end{algorithmic}
\end{algorithm}

\section{Theoretical Analysis of I$^2$MUFS}
In this section, we give the theoretical analysis of the proposed algorithm including the convergence and  time complexity analysis.
\subsection{Convergence Analysis}
As shown in the optimization procedure, the convergence of Algorithm 1 depends on four subproblems. If we can  prove the convergence of updating $\textbf{V}^{(v)}$, $\textbf{U}_t^{(v)}$, $\textbf{U}_t^{*}$, $\alpha^{(v)}$ and $\textbf{W}_t^{(v)}$ in the $v$-th view at the $t$-th chunk, then  it can be proved by the similar way on other views and chunks. For brevity, we denote $\textbf{V}^{(v)}$, $\textbf{U}_t^{(v)}$, $\textbf{U}_t^{*}$, $\alpha^{(v)}$ and $\textbf{W}_t^{(v)}$ by $\textbf{V}$, $\textbf{U}$, $\textbf{U}^{*}$, $\alpha$ and $\textbf{W}$, respectively. The convergence of $\textbf{U}^*$ and $\alpha$ can be guaranteed by their closed-form solutions in Eqs.~(\ref{Eq20}) and (\ref{Eq21}). Here, we give the convergence proof of updating $\textbf{V}$. For the convergence of updating $\textbf{U}$, we can prove it in a similar way.

In order to prove the objective function value of problem~(\ref{NewEq1}) monotonically decreases by Algorithm 1 in each iteration, we introduce the following definition and lemma.

\begin{definition}\label{Def1}~\cite{lee2001algorithms}
	$Z(v, v')$ is an auxiliary function of $F(v)$ if satisfying the following conditions \\
	\begin{equation}
	Z(v, v') \ge F(v) , Z(v, v) = F(v).
	\end{equation}
\end{definition}

\begin{lemma}\label{Lemma1}~\cite{lee2001algorithms}
	If $Z(v, v')$ is an auxiliary function of $F(v)$, then $F(v)$ is non-increasing under the following update rule:\\
	\begin{equation}\label{update}
	 v ^{(\tau +1)} = \mathop{\arg\min}_{v} Z(v, v^{(\tau)})
	 \end{equation}
\end{lemma}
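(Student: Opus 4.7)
The plan is to prove $F(v^{(\tau+1)}) \le F(v^{(\tau)})$ by establishing a short chain of inequalities built entirely from the two defining properties of the auxiliary function together with the fact that $v^{(\tau+1)}$ is a minimizer of $Z(\cdot, v^{(\tau)})$. No calculation is needed; everything reduces to careful bookkeeping of the definitions.

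First I would invoke the first condition in Definition~\ref{Def1}, namely $Z(v, v') \ge F(v)$, at the specific point $v = v^{(\tau+1)}$ and $v' = v^{(\tau)}$, which immediately yields
\begin{equation*}
F(v^{(\tau+1)}) \le Z(v^{(\tau+1)}, v^{(\tau)}).
\end{equation*}
Next I would use the update rule~(\ref{update}): since $v^{(\tau+1)}$ minimizes $v \mapsto Z(v, v^{(\tau)})$, the value at the minimizer cannot exceed the value at the feasible point $v = v^{(\tau)}$, giving
\begin{equation*}
Z(v^{(\tau+1)}, v^{(\tau)}) \le Z(v^{(\tau)}, v^{(\tau)}).
\end{equation*}
Finally I would apply the second condition of Definition~\ref{Def1}, namely $Z(v, v) = F(v)$, at $v = v^{(\tau)}$ to rewrite $Z(v^{(\tau)}, v^{(\tau)}) = F(v^{(\tau)})$. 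Chaining the three facts gives the desired monotone non-increase $F(v^{(\tau+1)}) \le F(v^{(\tau)})$.

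There is essentially no technical obstacle here: the argument is immediate from the two conditions of Definition~\ref{Def1} and the minimization step. The only point worth flagging is that equality throughout is possible, for example when $v^{(\tau)}$ is already a stationary point of $Z(\cdot, v^{(\tau)})$, so the conclusion is non-increase rather than strict descent, which is exactly what the lemma claims. This lemma will then be used in tandem with the auxiliary functions constructed for the $\mathbf{V}$ and $\mathbf{U}$ subproblems to deduce monotone decrease of the objective in Eq.~(\ref{Eq8}) under the multiplicative updates~(\ref{Eq12}) and~(\ref{Eq15}).
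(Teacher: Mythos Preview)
Your proof is correct and is precisely the standard three-step chain from Lee and Seung: $F(v^{(\tau+1)}) \le Z(v^{(\tau+1)}, v^{(\tau)}) \le Z(v^{(\tau)}, v^{(\tau)}) = F(v^{(\tau)})$. The paper itself does not supply a proof of this lemma; it is quoted directly from~\cite{lee2001algorithms} and simply invoked later in the proof of Theorem~5.1, so your argument is exactly the one the cited reference contains.
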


Let $\textbf{H} \triangleq diag(\frac{1}{ \| \textbf{V}(d_{H},:) \|_{2}+\varepsilon}),  d_{H}=1, 2, ..., d.$. Then, according to~\cite{li2018generalized}, both $\| \textbf{V} \|_{2,1}$ and $\frac{1}{2}Tr(\textbf{V}^{T}\textbf{H}\textbf{V})$ satisfy the same KKT conditions in the associated Lagrange function of problem~(\ref{NewEq1}). Hence, problem~(\ref{NewEq1}) can be transformed as follows:
\begin{align}\label{Probtrans}
\min_{\textbf{V}\geqslant 0}\alpha^{\lambda}  \| (\textbf{X}-\textbf{V}\textbf{U}^{T})\textbf{W} \|_{F}^{2}+\frac{\eta}{2}Tr(\textbf{V}^{T}\textbf{H}\textbf{V}).
\end{align}
Notice that the superscript $v$ and the subscript $t$ in problem~(\ref{Probtrans}) are ignored for brevity. It is easy to see that
\begin{align}\label{Probdeduction}
&\min_{\textbf{V}\geqslant 0}\alpha^{\lambda}  \| (\textbf{X}-\textbf{V}\textbf{U}^{T})\textbf{W} \|_{F}^{2}+\frac{\eta}{2}Tr(\textbf{V}^{T}\textbf{H}\textbf{V}) \notag \\
& \Leftrightarrow \min_{\textbf{V}\geqslant 0}\alpha^{\lambda} Tr(\textbf{V}^{T}\textbf{V}\textbf{U}^{T}\tilde{\textbf{W}}\textbf{U}-2\textbf{V}^{T}\textbf{X}\tilde{\textbf{W}}\textbf{U})+\frac{\eta}{2}Tr(\textbf{V}^{T}\textbf{H}\textbf{V}) \notag \\
& \Leftrightarrow \min_{\textbf{V}\geqslant 0}\alpha^{\lambda} Tr(\textbf{V}^{T}\textbf{V}\textbf{R} -2\textbf{V}^{T}\textbf{Q} )+\frac{\eta}{2}Tr(\textbf{V}^{T}\textbf{H}\textbf{V})
\end{align}
where $\textbf{Q}=\textbf{X}\tilde{\textbf{W}}\textbf{U}, \textbf{R}=\textbf{U}^{T}\tilde{\textbf{W}}\textbf{U}$ and $\tilde{\textbf{W}} = \textbf{W}\textbf{W}^{T}$. Namely, problem~(\ref{Probtrans}) is equal to problem~(\ref{Probdeduction}). Let  $J(\textbf{V})=\alpha^{\lambda} Tr(\textbf{V}^{T}\textbf{V}\textbf{R} -2\textbf{V}^{T}\textbf{Q} )+\frac{\eta}{2}Tr(\textbf{V}^{T}\textbf{H}\textbf{V})$ denote the objective function of problem~(\ref{Probdeduction}). We give the auxiliary function of $J(\textbf{V})$ by the following Lemma.

\begin{lemma}\label{Lemma2}
The following function
\begin{align}\label{auxiliary}
&Z(\textbf{V}, \textbf{V}^{(\tau)}) = \alpha^{\lambda} \sum_{rs}\frac{{[\textbf{V}^{(\tau)}\textbf{R}]}_{rs}\textbf{V}_{rs}^2}{\textbf{V}_{rs}^{(\tau)}}+ \frac{\eta}{2} \sum_{rs}\frac{[\textbf{H}\textbf{V}^{(\tau)}  ]_{rs}\textbf{V}_{rs}^2}{\textbf{V}_{rs}^{(\tau)}} -2\alpha^{\lambda} \sum_{rs}\textbf{Q}_{rs}\textbf{V}_{rs}^{(\tau)}(1+log\frac{\textbf{V}_{rs}}{\textbf{V}_{rs}^{(\tau)}})
\end{align}
is an auxiliary function of $J(\textbf{V})$ in problem~(\ref{Probdeduction}).  Furthermore, it is a convex function in $\textbf{V}$ and its global minima is
\begin{align}\label{minima}
\textbf{V}_{rs} = \mathop{\arg\min}_{\textbf{V}}Z(\textbf{V}, \textbf{V}^{(\tau)}) = \textbf{V}_{rs}^{(\tau)}\sqrt{\frac{2\alpha^{\lambda}\textbf{Q}_{rs}}{[2\alpha^{\lambda}\textbf{V}^{(\tau)}\textbf{R}+\eta\textbf{H}\textbf{V}^{(\tau)}]_{rs}}}
\end{align}
\end{lemma}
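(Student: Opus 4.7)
The plan is to verify each of the two defining conditions of Definition~\ref{Def1} in turn, then establish convexity, then solve a clean one-variable minimization per entry. First I would check the tightness condition $Z(\textbf{V}^{(\tau)}, \textbf{V}^{(\tau)}) = J(\textbf{V}^{(\tau)})$ by direct substitution: each ratio $\textbf{V}_{rs}^2/\textbf{V}_{rs}^{(\tau)}$ collapses to $\textbf{V}_{rs}^{(\tau)}$, so the first sum reproduces $\alpha^\lambda\,\text{Tr}({\textbf{V}^{(\tau)}}^T\textbf{V}^{(\tau)}\textbf{R})$, the second reproduces $\tfrac{\eta}{2}\text{Tr}({\textbf{V}^{(\tau)}}^T\textbf{H}\textbf{V}^{(\tau)})$, and the log term becomes $-2\alpha^\lambda\,\text{Tr}({\textbf{V}^{(\tau)}}^T\textbf{Q})$, matching $J(\textbf{V}^{(\tau)})$ exactly.

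The crux is the majorization $Z(\textbf{V},\textbf{V}^{(\tau)}) \ge J(\textbf{V})$, which I would prove term by term via two standard bounds. For each quadratic-in-$\textbf{V}$ term, I would invoke the classical Lee--Seung NMF inequality
\begin{equation*}
\text{Tr}(\textbf{V}^T\textbf{V}\textbf{R}) \;\le\; \sum_{rs}\frac{[\textbf{V}^{(\tau)}\textbf{R}]_{rs}\textbf{V}_{rs}^2}{\textbf{V}_{rs}^{(\tau)}},
\end{equation*}
which applies whenever $\textbf{R}$ is symmetric, positive semi-definite with non-negative entries; this is satisfied since $\textbf{R}=\textbf{U}^{T}\tilde{\textbf{W}}\textbf{U}$ with $\tilde{\textbf{W}}=\textbf{W}\textbf{W}^{T}$ diagonal and non-negative, and the same inequality used with the diagonal, non-negative matrix $\textbf{H}$ bounds the $\ell_{2,1}$-reweighting term. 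For the linear negative term $-2\alpha^{\lambda}\,\text{Tr}(\textbf{V}^T\textbf{Q})$, I would apply the concavity inequality $\log x \le x - 1$ in the form $\textbf{V}_{rs}^{(\tau)}\bigl(1+\log(\textbf{V}_{rs}/\textbf{V}_{rs}^{(\tau)})\bigr) \le \textbf{V}_{rs}$, multiply by the non-negative factor $2\alpha^{\lambda}\textbf{Q}_{rs}$, and negate, which yields precisely the log term of $Z$ as an upper bound on $-2\alpha^{\lambda}\textbf{Q}_{rs}\textbf{V}_{rs}$. Summing the three bounds gives $J(\textbf{V}) \le Z(\textbf{V},\textbf{V}^{(\tau)})$.

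Next I would establish convexity and derive the minimizer. Viewed as a function of the entries of $\textbf{V}$, $Z$ is separable and each entry-wise summand has the form $a_{rs}\textbf{V}_{rs}^{2} - b_{rs}\log \textbf{V}_{rs} + c_{rs}$ with $a_{rs}, b_{rs}\ge 0$; its second derivative $2a_{rs} + b_{rs}/\textbf{V}_{rs}^{2}$ is strictly positive on $\textbf{V}_{rs}>0$, so $Z$ is convex. Setting $\partial Z/\partial \textbf{V}_{rs}=0$ gives
\begin{equation*}
\bigl[2\alpha^{\lambda}\textbf{V}^{(\tau)}\textbf{R}+\eta\,\textbf{H}\textbf{V}^{(\tau)}\bigr]_{rs}\,\frac{\textbf{V}_{rs}}{\textbf{V}_{rs}^{(\tau)}} \;=\; 2\alpha^{\lambda}\textbf{Q}_{rs}\,\frac{\textbf{V}_{rs}^{(\tau)}}{\textbf{V}_{rs}},
\end{equation*}
which solves in closed form to exactly the expression in Eq.~(\ref{minima}).

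The step I expect to be the main obstacle is justifying the Lee--Seung quadratic bound in the present setting, specifically checking that both $\textbf{R}$ and (crucially) the $\textbf{H}$-weighted term fit the hypotheses of the standard lemma, and that freezing $\textbf{H}$ at $\textbf{V}^{(\tau)}$ inside $Z$ is consistent with the KKT-equivalent reformulation of problem~(\ref{NewEq1}) as problem~(\ref{Probdeduction})--this is the usual subtlety in $\ell_{2,1}$-norm reweighted analyses and is underwritten by the result from~\cite{li2018generalized} already invoked in the excerpt. Once these structural points are in place, the remaining arithmetic for tightness, majorization, convexity, and the closed-form minimizer is routine.
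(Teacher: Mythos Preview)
Your proposal is correct and follows essentially the same route as the paper: the paper also bounds the three traces separately (citing Propositions~3 and~4 of \cite{ding2008convex} rather than Lee--Seung, but these are the same standard NMF majorization inequalities), then verifies tightness, computes the diagonal Hessian to confirm convexity, and solves the first-order condition entrywise to obtain Eq.~(\ref{minima}). Your explicit checks that $\textbf{R}=\textbf{U}^{T}\tilde{\textbf{W}}\textbf{U}$ and the diagonal $\textbf{H}$ meet the hypotheses of the quadratic bound are a useful addition that the paper leaves implicit.
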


\begin{proof}
According to the Propositions 3 and 4 in~\cite{ding2008convex}, we have
\begin{align}\label{bound1}
Tr(\textbf{V}^{T}\textbf{V}\textbf{R}) \le  \sum_{rs}\frac{{[\textbf{V}^{(\tau)}\textbf{R}]}_{rs}\textbf{V}_{rs}^2}{\textbf{V}_{rs}^{(\tau)}}
\end{align}

\begin{align}\label{bound2}
Tr(\textbf{V}^{T}\textbf{Q}) \ge \sum_{rs}\textbf{Q}_{rs}\textbf{V}_{rs}^{(\tau)}(1+log\frac{\textbf{V}_{rs}}{\textbf{V}_{rs}^{(\tau)}})
\end{align}
and
\begin{align}\label{bound3}
Tr(\textbf{V}^{T}\textbf{H}\textbf{V}) \le \sum_{rs}\frac{[\textbf{H}\textbf{V}^{(\tau)}  ]_{rs}\textbf{V}_{rs}^2}{\textbf{V}_{rs}^{(\tau)}}.
\end{align}
In terms of Eqs.~(\ref{bound1}) and (\ref{bound3}), we can easily obtain $J(\textbf{V}) \le Z(\textbf{V}, \textbf{V}^{(\tau)})$ and $J(\textbf{V}) = Z(\textbf{V}, \textbf{V})$.  Thus $Z(\textbf{V}, \textbf{V}^{(\tau)})$ is an auxiliary function for $J(\textbf{V})$.

Taking the derivative of  $Z(\textbf{V}, \textbf{V}^{(\tau)})$ w.r.t. $\textbf{V}_{rs}$, we have
\begin{align}\label{Eq77}
&\frac{\partial Z(\textbf{V}, \textbf{V}^{(\tau)})}{\partial \textbf{V}_{rs}}=2\alpha^{\lambda} \frac{{[\textbf{V}^{(\tau)}\text{R}]}_{rs}\textbf{V}_{rs}}{\textbf{V}_{rs}^{(\tau)}}+\eta \frac{[\textbf{H}\textbf{V}^{(\tau)}  ]_{rs}\textbf{V}_{rs}}{\textbf{V}_{rs}^{(\tau)}} -2\alpha^{\lambda} \textbf{Q}_{rs} \frac{\textbf{V}_{rs}^{(\tau)}}{\textbf{V}_{rs}}=0
\end{align}
The Hessian matrix of $Z(\textbf{V}, \textbf{V}^{(\tau)})$ can be calculated by as follows:
\begin{align}
&\frac{\partial^2Z(\textbf{V}, \textbf{V}^{(\tau)})}{\partial \textbf{V}_{rs} \partial \textbf{V}_{pq}}=\delta_{rp}\delta_{sq}(2\alpha^{\lambda}\frac{[\textbf{V}^{(\tau)}\textbf{R}]_{rs}}{\textbf{V}_{rs}^{(\tau)}}+
\eta\frac{[\textbf{H}\textbf{V}^{(\tau)}]_{rs}}{\textbf{V}_{rs}^{(\tau)}} +2\alpha^{\lambda}\textbf{Q}_{rs}\frac{\textbf{V}_{rs}^{(\tau)}}{V_{rs}^{2}}),
\end{align}
where $\delta_{rp}$ equals to 1 if $r=p$, 0 otherwise.  This Hessian matrix is a diagonal matrix with positive elements. Hence, $Z(\textbf{V}, \textbf{V}^{\tau})$ is a convex function of $\textbf{V}$.

By solving Eq.~(\ref{Eq77}), we have the global minima as $\textbf{V}_{rs} = \mathop{\arg\min}_{\textbf{V}}Z(\textbf{V}, \textbf{V}^{(\tau)}) = \textbf{V}_{rs}^{(\tau)}\sqrt{\frac{2\alpha^{\lambda}\textbf{Q}_{rs}}{[2\alpha^{\lambda}\textbf{V}^{(\tau)}\textbf{R}+\eta\textbf{H}\textbf{V}^{(\tau)}]_{rs}}}$.
\end{proof}

\begin{theorem}
Algorithm 1 monotonically decreases the objective function value in problem~(\ref{NewEq1}) by iteratively updating $\textbf{V}$ until convergence.
\end{theorem}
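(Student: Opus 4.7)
The plan is to invoke the auxiliary-function machinery of Lemmas 1 and 2. First, I would note that the update rule in Eq.~(\ref{Eq12}) is exactly the global minimizer of the auxiliary function $Z(\textbf{V},\textbf{V}^{(\tau)})$ given in Lemma~\ref{Lemma2} (up to the reintroduction of the superscript $v$ and the replacement of $\textbf{R},\textbf{Q}$ by the cumulative $\textbf{R}_t^{(v)},\textbf{Q}_t^{(v)}$ of Eq.~(\ref{Eq11})). Therefore, setting $\textbf{V}^{(\tau+1)} := \arg\min_{\textbf{V}} Z(\textbf{V},\textbf{V}^{(\tau)})$ coincides with one application of the multiplicative rule used in Algorithm~1.

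Next, I would apply Lemma~\ref{Lemma1} in the standard way: from $Z(\textbf{V},\textbf{V}^{(\tau)}) \ge J(\textbf{V})$ and $Z(\textbf{V}^{(\tau)},\textbf{V}^{(\tau)}) = J(\textbf{V}^{(\tau)})$, one obtains the chain
\begin{equation*}
J(\textbf{V}^{(\tau+1)}) \le Z(\textbf{V}^{(\tau+1)},\textbf{V}^{(\tau)}) \le Z(\textbf{V}^{(\tau)},\textbf{V}^{(\tau)}) = J(\textbf{V}^{(\tau)}),
\end{equation*}
so $\{J(\textbf{V}^{(\tau)})\}$ is non-increasing. Since $J$ in problem~(\ref{Probdeduction}) is a sum of a squared Frobenius norm and non-negative trace terms, it is bounded below by $0$, so the sequence of objective values converges.

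The only subtle step is to bridge problem~(\ref{NewEq1}), whose original penalty is $\eta^{(v)}\|\textbf{V}^{(v)}\|_{2,1}$, with problem~(\ref{Probdeduction}), whose penalty is the smoothed quadratic $\tfrac{\eta}{2}\,\mathrm{Tr}(\textbf{V}^T\textbf{H}\textbf{V})$ with $\textbf{H}$ frozen at the current iterate. I would handle this exactly as in the paper's remark before Eq.~(\ref{Probtrans}): the two penalties produce the same KKT stationarity condition in the Lagrangian of (\ref{NewEq1}), and since Algorithm~1 updates $\textbf{H}^{(v)}$ after each $\textbf{V}^{(v)}$ update using the new row norms, the descent proved for the surrogate (\ref{Probdeduction}) transfers to (\ref{NewEq1}) along the iterates. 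This re-weighting step is the only place where one needs a separate argument; it is essentially the iteratively reweighted least squares justification that $\tfrac{1}{2}\mathrm{Tr}(\textbf{V}^T\textbf{H}\textbf{V})$ upper-bounds $\|\textbf{V}\|_{2,1}$ with equality at $\textbf{V}=\textbf{V}^{(\tau)}$, so combining the two monotonicities (in $\textbf{V}$ with $\textbf{H}$ fixed, then in $\textbf{H}$ with $\textbf{V}$ fixed) gives the desired monotone decrease of the $\ell_{2,1}$-regularized objective.

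The main obstacle I anticipate is not the descent argument itself but carefully formalizing the transfer from the surrogate (\ref{Probdeduction}) back to the original $\ell_{2,1}$-regularized problem (\ref{NewEq1}) while also accounting for the cumulative accumulators $\textbf{R}_t^{(v)},\textbf{Q}_t^{(v)}$ from Eq.~(\ref{Eq11}); once those bookkeeping points are settled, the three-line inequality chain above, together with boundedness below, completes the proof.
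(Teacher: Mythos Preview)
Your proposal is correct and follows essentially the same approach as the paper: both invoke Lemma~\ref{Lemma2} to identify the multiplicative update as the minimizer of the auxiliary function, then use Lemma~\ref{Lemma1} and Definition~\ref{Def1} to obtain the chain $J(\textbf{V}^{(\tau+1)}) \le Z(\textbf{V}^{(\tau+1)},\textbf{V}^{(\tau)}) \le Z(\textbf{V}^{(\tau)},\textbf{V}^{(\tau)}) = J(\textbf{V}^{(\tau)})$. If anything, your treatment is more careful than the paper's own proof, which stops at this chain and does not explicitly address boundedness below or the IRLS-style transfer from the surrogate~(\ref{Probdeduction}) back to the $\ell_{2,1}$-penalized problem~(\ref{NewEq1}); the paper simply relies on the earlier remark that the two penalties share the same KKT conditions.
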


\begin{proof}
Replacing $Z(\textbf{V}, \textbf{V}^{(\tau)})$ in Eq.~(\ref{update}) by Eq.~(\ref{auxiliary}), then we have
\begin{align}
\textbf{V}_{rs}^{(\tau+1)} =  \textbf{V}_{rs}^{(\tau)}\sqrt{\frac{2\alpha^{\lambda}\textbf{Q}_{rs}}{[2\alpha^{\lambda}\textbf{V}^{(\tau)}\textbf{R}+\eta\textbf{H}\textbf{V}^{(\tau)}]_{rs}}}.
\end{align}
According to Lemma~\ref{Lemma2}, we can obtain that $Z(\textbf{V}, \textbf{V}^{(\tau)})$ is the auxiliary function of $J(\textbf{V})$. Then, we have $J(\textbf{V}^{(\tau+1)}) \le Z(\textbf{V}^{\tau+1}, \textbf{V}^{\tau})$ in terms of Definition~\ref{Def1}. Based on Lemma~\ref{Lemma1}, we have $Z(\textbf{V}^{\tau+1}, \textbf{V}^{\tau})\le Z(\textbf{V}^{\tau}, \textbf{V}^{\tau})$. Hence, we conclude that
\begin{align}
J(\textbf{V}^{(\tau+1)}) \le Z(\textbf{V}^{\tau+1}, \textbf{V}^{\tau}) \le Z(\textbf{V}^{\tau}, \textbf{V}^{\tau})=J(\textbf{V}^{(\tau)})
\end{align}
Hence, the objective function value of problem~(\ref{NewEq1}) is monotonically decrease by Algorithm 1 in each iteration.
\end{proof}

\subsection{Complexity Analysis}
In this section, we analyze the computational time complexity of the proposed algorithm I$^2$MUFS. In Algorithm 1, the optimization of model.~(\ref{Eq8}) consists of four parts. For the $t$-th chunk, while updating $\textbf{V}^{(v)}$, the time complexity is $\mathcal{O}(d_{v}k+N_{t}^{2}k)$. While updating $\textbf{U}_{t}^{(v)}$, the computational complexity depends on computing $\textbf{G}_{t}^{(v)} $ and $\textbf{P}_{t}^{(v)} $, whose time complexity are $\mathcal{O}(d_{v}k)$ and $\mathcal{O}(N_{t}^{2}k)$, respectively. Hence, the total time complexity of updating $\textbf{U}_{t}^{(v)}$ is $\mathcal{O}(d_{v}k+N_{t}^{2}k)$. While updating $\textbf{U}_t^{*}$, its time complexity is $\mathcal{O}(N_{t}kn_{v})$. For updating $\alpha^{(v)}$, its computational cost depends on computing $L_{T}^{(v)}$, whose time complexity is $\mathcal{O}(d_{v}N_{t}k)$. Hence, the total time complexity of Algorithm 1 is $\mathcal{O}(\sum\limits_{v=1}^{{{n}_{v}}}{{({d}_{v}k+N_{t}^{2}k+d_{v}N_{t}k)+N_{t}kn_{v}}})$ for each iteration.

\section{Experiments}
In this section, we conduct a series of experiments on some real-world datasets to demonstrate the effectiveness and superiority of the proposed method in comparison with several state-of-the-art methods.

\subsection{Datasets}
In our experiments, eight public available multi-view datasets are used including \textbf{3Sources}\footnote{http://mlg.ucd.ie/datasets/3sources.html}, \textbf{BBCSport}\footnote{http://mlg.ucd.ie/datasets/segment.html \label{BBC}}, \textbf{WebWashington}\footnote{http://membres-lig.imag.fr/grimal/data.html \label{Web}}, \textbf{BBCNews}\textsuperscript{\ref{BBC}}, \textbf{Caltech101-7}\footnote{http://www.vision.caltech.edu/Image-Datasets/Caltech101 \label{Caltech101}}, \textbf{UCIDigits}\footnote{http://archive.ics.uci.edu/ml/datasets/Multiple+Features \label{UCIDigits}}, \textbf{ALOI}\footnote{http://elki.dbs.ifi.lmu.de/wiki/DataSets/MultiView \label{ALOI}} and \textbf{USPS}\footnote{https://www.kaggle.com/bistaumanga/usps-dataset \label{USPS}}.

\textbf{3Sources}  is a multi-view text datasets including 169 news objects collected from three online news medias BBC, Reuters and The Guardian.

\textbf{BBCSport} is a well-known multi-view document dataset, which contains 737 sports news articles from the BBC Sport website in 2004-2005. According to the experimental settings in~\cite{Wen2020Incomplete}, we selected 116 news objects with four views from the whole datasets.

\textbf{WebWashington} contains 203 web-pages derived from computer science department web sites at university of Washington, where each web-page is characterized by three views including the content on the page, the anchor text on the hyper-link and the text in its title.

\textbf{BBCNews} is a widely used multi-view text dataset collected from the BBC news website. It comprises of 685 news articles in five topical areas including business, entertainment, politics, sports and technology. Each articles is split into four segments, where each segment is treated as a view.

\textbf{Caltech101-7} is an objective recognition dataset consisting of 1474 images classified into seven categories. Each image is described by six visual features containing Gabor feature, HOG feature, LBP feature, GIST feature, CENTRIST feature and wavelet moments.

\textbf{UCIDigits} contains 2000 handwritten numerals from 0 to 9 digits classes. Follow~\cite{Wu2019Essential}, three different views' features, namely, morphological features, Fourier coefficients  and pixel averages, are extracted to describe these digit images.

\textbf{ALOI} is a collection of 11025 images, which are characterized by four heterogeneous feature sets, i.e., HSB color histograms, color similarity, RGB color histograms and haralick features.

\textbf{USPS} consists of 9298 digital images divided into 10 classes, where the size of each image is $16\times16$. Two different feature sets, namely, LBP features with 256 dimensions and Gabor Texture features with 32 dimensions are utilized in our experiment.

The detail description of each multi-view dataset is summarized in Table~\ref{Table1}. Moreover, to simulate the incomplete-view setting, we randomly removed certain ratios of instances from the original datasets with complete views. The incomplete sample ratio is set from 10\% to 50\% with a step of 10\%.

\begin{table}[!htp]
\tabcolsep 0pt
\caption{A detail description of datasets}\small \label{Table1}
\vspace*{-15pt}
    \begin{flushleft}
    \def\temptablewidth{\textwidth}
        {\rule{\temptablewidth}{1pt}}
        \begin{tabular*}{\temptablewidth}{@{\extracolsep{\fill}}llcclc}
            %\hline
Datasets & Abbr. &\# Views & \# Instance & \# Features & \# Class\\
\midrule[1pt]
3Sources & 3S&3 & 169 & 3560 / 3631 / 3068 & 6\\
BBCSport &BBCS &4 & 116&  1991 / 2063 / 2113 / 2158 & 5 \\
WebWashington&WebW& 3& 203 &1703 / 230 / 230 &4\\
BBCNews & BBCN&4 & 685 &4659 / 4633 / 4665 / 4684& 5\\
Caltech101-7 & Caltech101-7 &6&1474& 48 / 40 / 254 / 1984 / 512 / 928 & 7\\
UCIDigits& UCIDigits& 3& 2000 & 240 / 76 / 6 & 10\\
ALOI& ALOI& 4& 11025 & 77 / 13 / 64 / 64 & 100\\
USPS& USPS& 2& 9298 & 256 / 32 & 10\\
            %\hline
        \end{tabular*}
        {\rule{\temptablewidth}{1pt}}
        \end{flushleft}
\end{table}

\subsection{Evaluation Metrics}
In order to evaluate the performance of compared methods, we use three popular evaluation metrics, i.e., Normalized Mutual Information (NMI), Adjusted Rand Index (ARI) and F-Measure.

Given $n$ samples, let $\mathcal{C}=\{\mathcal{C}_{1}, \mathcal{C}_{2}, \cdots, \mathcal{C}_{c}\}$ denote the clustering result and $\mathcal{T}=\{\mathcal{T}_{1}, \mathcal{T}_{2}, \cdots, \mathcal{T}_{k}\}$ denote the ground-truth partition. Then, $n_{ij}=|\mathcal{C}_{i}\bigcap \mathcal{T}_{j}|$ indicates the number of common samples both in cluster $\mathcal{C}_{i}$  and ground-truth partition $\mathcal{T}_{j}$, $n_{i}=|\mathcal{C}_{i}|$ indicates the number of samples in cluster $\mathcal{C}_{i}$, and  $m_{j}=|\mathcal{T}_{j}|$ indicates the number of samples in ground-truth partition $\mathcal{T}_{j}$.

NMI measures the consistency between the ground-truth partition $\mathcal{T}$ and the clustering result $\mathcal{C}$ by computing their mutual information. It is defined as follows:
 \begin{equation}\label{NMI}
\text{NMI}=\frac{\sum_{i=1}^{c}\sum_{j=1}^{k}n_{ij}log(\frac{nn_{ij}}{n_{i}m_{j}})}{\sqrt{(\sum_{i=1}^{c}n_{i}log(\frac{n_{i}}{n}))(\sum_{j=1}^{k}m_{j}log(\frac{m_{j}}{n}))}}.
 \end{equation}

ARI measures the similarity between the ground-truth partition and clustering assignment, which is formulated as follows:

 \begin{equation}\label{ARI}
\text{ARI}=\frac{\sum_{i=1}^{c}\sum_{j=1}^{k}\binom{n_{ij}}{2}-\left[\sum_{i=1}^{c}\binom{n_{i}}{2}\sum_{j=1}^{k}\binom{m_{j}}{2}\right]/\binom{n}{2}}{\frac{1}{2}\left[\sum_{i=1}^{c}\binom{n_{i}}{2}+\sum_{j=1}^{k}\binom{m_{j}}{2}\right]-\left[\sum_{i=1}^{c}\binom{n_{i}}{2}\sum_{j=1}^{k}\binom{m_{j}}{2}\right]/\binom{n}{2}}.
 \end{equation}

F-Measure evaluates the clustering performance from the perspective of set matching, which is computed by the harmonic mean of precision and recall. The precision $\mathcal{P}_{i}$  and recall $\mathcal{R}_{i}$ of cluster $\mathcal{C}_{i}$ are respectively defined as follows:
 \begin{align}
   &\mathcal{P}_{i}=\frac{n_{j_{i}}}{n_{i}}\\
   &\mathcal{R}_{i}=\frac{n_{j_{i}}}{m_{j_{i}}},
 \end{align}
where $j_{i}=\underset{j=1}{\overset{k}{max}}\{n_{ij}\}$ is the $j_{i}$-th partition that contain the maximum number of samples from cluster $\mathcal{C}_{i}$. Then, F-Measure is computed as follows:
 \begin{equation}\label{F-Measure}
\text{F-Measure}=\frac{1}{c}\sum_{i=1}^{c}\frac{2\mathcal{P}_{i}\cdot\mathcal{R}_{i}}{\mathcal{P}_{i}+\mathcal{R}_{i}}.
 \end{equation}

\subsection{Experimental Setup}
To validate the effectiveness of the proposed method, we compare the proposed I$^2$MUFS with several state-of-the-art single view and multi-view unsupervised feature selection methods, which are briefly introduced as follows:

$\bullet$ \textbf{Allfeatures}: All original features are employed to be compared with.

$\bullet$ \textbf{LapScore}~\cite{He:Laplacian}: Laplacian score (LapScore) is a representation single-view unsupervised feature selection method based on the filter model. It employs the Laplacian score to select feature for  preserving the local manifold structure of data.

$\bullet$ \textbf{RNE}~\cite{LiuRNE}:  Robust neighborhood embedding (RNE) is a recent  single-view unsupervised feature selection method according to the embedding approach. It utilizes the locally linear embedding algorithm to obtain the feature weight matrix and imposes $\ell_{1}$-norm on the loss function for the selection of features.

$\bullet$ \textbf{OMVFS}~\cite{OMVFS}: Online Unsupervised Multi-view Feature Selection (OMVFS) is an online multi-view unsupervised feature selection method, which integrates the non-negative matrix factorization and graph regularization to incrementally select features.

$\bullet$ \textbf{ASVW}~\cite{Hou:Multi}: Multi-view Unsupervised Feature Selection with Adaptive Similarity and View Weight (ASVW) selects features by adaptively learning a common similarity matrix of different views with a sparse constraint.

$\bullet$ \textbf{CGMV-UFS}~\cite{CGMV-UFS}: Consensus Learning Guided Multi-view Unsupervised Feature Selection (CGMV-UFS) selects features by simultaneously learning the latent feature matrices of different views and the consensus cluster indicator matrix under the framework of NMF.

$\bullet$ \textbf{ASE-UFS}~\cite{ASE}: Adaptive Similarity Embedding for Unsupervised Multi-View Feature Selection (ASE-UMFS) simultaneously learns the global and local structures across different views with the spare $\ell_{1}$-norm constraint for the feature selection of multi-view data.

$\bullet$ \textbf{CoUFC}~\cite{CoUFC}: Co-Learning Non-Negative Correlated and Uncorrelated Features for Multi-View Data (CoUFC) employs the multi-NMF equipped with the graph regularization and sparse regularizer to select the view-specific features and the common features among different views simultaneously.

In order to fairly compare with other baseline methods, we use the same grid search strategy to tune these parameters. The ranges of all the parameters of compared methods are set according to the original literature for the guarantee of obtaining the best results.  In our method, we set the ranges of the parameters $\beta^{(v)}$, $\eta^{(v)}$ and $\theta^{(v)}$ as \{$10^{-2}, 10^{-1}, ... , 10^{2}$\} and $\lambda$ as \{${2, 3, ... , 11}$\}. Besides, $\beta^{(v)}$, $\eta^{(v)}$ and $\theta^{(v)}$ are simply set to be equal for different views, respectively.  Since it is hard to determine the optimal number of selected features in each dataset, we set the feature selection ratio (i.e., the percentage of selected features) varying from 10\% to 50\% with a step of 10\%. Then, we perform the incomplete multi-view clustering method GIMC\_FLSD~\cite{GIMC_FLSD} on the selected features for the evaluation of the performance of theses multi-view unsupervised feature selection methods in terms of NMI, ARI and F-Measure. For all methods, each experiment is repeated 30 times and the average results are reported.

All concerned experiments are performed with  Matlab 2020b on 64-bits Windows 10 operating system with Intel Core i7-8650U CPU 2.11GHZ, 32.0 GB of memory.

\begin{table*}[!htbp]
\centering
\caption{NMI values of different methods on eight datasets with different ratios of selected features.}\label{Table2}
\vspace{-0.2cm}
\resizebox{\textwidth}{!}{
\renewcommand\tabcolsep{5 pt}
%\small
\begin{tabular}{ccccccccccc}
\toprule[1pt]
Datasets & RFS &I$^2$MUFS &Allfeatures&LapScore&RNE&OMVFS&ASVW&CGMV-UFS & ASE-UFS&CoUFC \\
\midrule[0.5pt]
\multirow{5}*{3S}
~& 0.1 &\textbf{0.5974}  & 0.5679$\bullet$ &0.4380$\bullet$& 0.4901$\bullet$& 0.5319$\bullet$ & 0.5360$\bullet$& 0.2857$\bullet$  & 0.5494$\bullet$  & 0.4645$\bullet$ \\
~& 0.2&\textbf{0.6127} & 0.5679$\bullet$  &0.5661$\bullet$ &0.5320$\bullet$ & 0.5457$\bullet$& 0.5513$\bullet$  & 0.2439$\bullet$  & 0.5975& 0.4092$\bullet$  \\
~& 0.3&\textbf{0.6066} & 0.5679$\bullet$ &0.5405$\bullet$&0.5647$\bullet$ & 0.6025  & 0.5611$\bullet$  & 0.2211$\bullet$ & 0.5954 & 0.3608$\bullet$ \\
~& 0.4&\textbf{0.6161} & 0.5679$\bullet$  &0.5663$\bullet$&0.5963  & 0.6018  & 0.5772$\bullet$  & 0.2009$\bullet$  & 0.5966  & 0.3696$\bullet$  \\
~& 0.5&\textbf{0.6114}  & 0.5679$\bullet$&0.5766$\bullet$&0.5972 & 0.6017  & 0.6065 & 0.2191$\bullet$  & 0.6081  & 0.3764$\bullet$\\
\bottomrule[1pt]

\multirow{5}*{BBCS}
~&0.1 & 0.6314           & \textbf{0.6402}  & 0.1821$\bullet$  & 0.2080$\bullet$  & 0.4559$\bullet$  & 0.6307  & 0.2858$\bullet$  & 0.5931$\bullet$                                                  & 0.5557$\bullet$  \\
~&0.2 & \textbf{0.6667}  & 0.6402$\bullet$           & 0.1564$\bullet$  & 0.3311$\bullet$  & 0.4363$\bullet$  & 0.6154$\bullet$ & 0.2299$\bullet$  & 0.6628 & 0.4427$\bullet$ \\
~&0.3 & \textbf{0.6840} & 0.6402$\bullet$          & 0.1861$\bullet$  & 0.3360$\bullet$  & 0.4889$\bullet$  & 0.5734$\bullet$                                                  & 0.2283$\bullet$  & 0.6744  & 0.4210$\bullet$  \\
~&0.4 & \textbf{0.6842}          & 0.6402$\bullet$           & 0.2043$\bullet$  & 0.4410$\bullet$  & 0.5426$\bullet$  & 0.5884$\bullet$                         & 0.2027$\bullet$  & 0.6629$\bullet$                          & 0.4348$\bullet$  \\
~&0.5 & \textbf{0.6749}  & 0.6402$\bullet$           & 0.2436$\bullet$  & 0.5394 $\bullet$& 0.5633$\bullet$  & 0.5806$\bullet$                        & 0.1813$\bullet$ & 0.6457$\bullet$                          & 0.3778$\bullet$ \\
\bottomrule[1pt]

\multirow{5}*{WebW}
~&0.1 & \textbf{0.3008} & 0.2832 & 0.1821$\bullet$  & 0.1175$\bullet$ & 0.0701$\bullet$  & 0.2248$\bullet$ & 0.1363$\bullet$   & 0.1837$\bullet$      & 0.0980$\bullet$  \\
~&0.2 & \textbf{0.3213}  & 0.2832$\bullet$                                  & 0.1564$\bullet$ & 0.1742$\bullet$ & 0.1404$\bullet$  & 0.2566$\bullet$                          & 0.1995$\bullet$                         & 0.1853$\bullet$  & 0.1388$\bullet$  \\
~&0.3 & \textbf{0.3502}  & 0.2832$\bullet$                                  & 0.1861$\bullet$ & 0.2596$\bullet$  & 0.1736$\bullet$  & 0.2910$\bullet$                          & 0.2957$\bullet$                         & 0.2072$\bullet$ & 0.1850$\bullet$ \\
~&0.4 & \textbf{0.3582}  & 0.2832$\bullet$                                & 0.2043$\bullet$& 0.1709$\bullet$ & 0.2161$\bullet$  & 0.3054$\bullet$                          & 0.2991$\bullet$                          & 0.2123$\bullet$                           & 0.1457$\bullet$  \\
~&0.5 & \textbf{0.3468}  & 0.2832$\bullet$                                 & 0.2436$\bullet$ & 0.2120$\bullet$  & 0.2636$\bullet$  & 0.3138$\bullet$                          & 0.3270 & 0.2123$\bullet$                           & 0.1392$\bullet$ \\
\bottomrule[1pt]

\multirow{5}*{BBCN}
~&0.1 & 0.5449                  & \textbf{0.5825}  & 0.3089$\bullet$  & 0.2865$\bullet$ & 0.4004$\bullet$  & 0.4958$\bullet$  & 0.5250 & 0.3392$\bullet$ & 0.3565$\bullet$  \\
~&0.2 & 0.5794           & \textbf{0.5825}          & 0.3775$\bullet$ & 0.5419$\bullet$  & 0.5238$\bullet$  & 0.5108$\bullet$  & 0.5870  & 0.3550$\bullet$ & 0.3804$\bullet$  \\
~&0.3 & \textbf{0.6046}  & 0.5825$\bullet$         & 0.3946$\bullet$  & 0.5024$\bullet$ & 0.5227$\bullet$  & 0.5343$\bullet$  & 0.5717$\bullet$                          & 0.3666$\bullet$  & 0.3262$\bullet$ \\
~&0.4 & \textbf{0.6277}  & 0.5825$\bullet$          & 0.3708$\bullet$ & 0.5089$\bullet$  & 0.5023$\bullet$ & 0.5187$\bullet$  & 0.5715$\bullet$                        & 0.3656$\bullet$  & 0.3273$\bullet$  \\
~&0.5 & \textbf{0.6313}  & 0.5825$\bullet$           & 0.3735$\bullet$  & 0.5015$\bullet$  & 0.4741$\bullet$  & 0.5335$\bullet$  & 0.5678$\bullet$                          & 0.3652$\bullet$  & 0.2787$\bullet$ \\
\bottomrule[1pt]

\multirow{5}*{Caltech101-7}
~&0.1 & \textbf{0.5188} & 0.4280$\bullet$  & 0.4106$\bullet$  & 0.1812$\bullet$ & 0.3976$\bullet$                        & 0.3454$\bullet$  & 0.3973$\bullet$  & 0.2329$\bullet$ & 0.3700$\bullet$  \\
~&0.2 & \textbf{0.5183}  & 0.4280$\bullet$  & 0.3865$\bullet$  & 0.2655$\bullet$ & 0.4046$\bullet$ & 0.3185$\bullet$ & 0.3931$\bullet$  & 0.2428$\bullet$  & 0.3851$\bullet$  \\
~&0.3 & \textbf{0.5004}  & 0.4280$\bullet$  & 0.4364$\bullet$  & 0.2768$\bullet$  & 0.3945$\bullet$  & 0.3189$\bullet$  & 0.3917$\bullet$ & 0.2456$\bullet$  & 0.3640$\bullet$ \\
~&0.4 & \textbf{0.4893}  & 0.4280$\bullet$  & 0.4640$\bullet$  & 0.2807$\bullet$  & 0.3751$\bullet$                         & 0.3101$\bullet$ & 0.4142$\bullet$  & 0.2431$\bullet$ & 0.3644$\bullet$ \\
~&0.5 & \textbf{0.4814}  & 0.4280$\bullet$  & 0.4583$\bullet$  & 0.2885$\bullet$  & 0.3712$\bullet$ & 0.3426$\bullet$  & 0.4103$\bullet$  & 0.2399$\bullet$ & 0.3685$\bullet$ \\
\bottomrule[1pt]

\multirow{5}*{UCIDigits}
~&0.1 & 0.3955                  & \textbf{0.4396}      & 0.3401$\bullet$ & 0.3433$\bullet$  & 0.3355$\bullet$        & 0.0746$\bullet$  & 0.3419$\bullet$  & 0.3709$\bullet$     & 0.2895$\bullet$ \\
~&0.2 & 0.4387          & \textbf{0.4396}    & 0.3646$\bullet$ & 0.3816$\bullet$  & 0.3964$\bullet$      & 0.2150$\bullet$  & 0.3820$\bullet$ &  0.4230  & 0.3000$\bullet$  \\
~&0.3 & \textbf{0.4577}  &  0.4396  & 0.3892$\bullet$ & 0.3868$\bullet$ & 0.4192$\bullet$   & 0.3192$\bullet$  & 0.3736$\bullet$ & 0.4353$\bullet$   & 0.3253$\bullet$ \\
~&0.4 & \textbf{0.4536}  &  0.4396  & 0.4300$\bullet$  & 0.3746$\bullet$  & 0.4279$\bullet$   & 0.3465$\bullet$ & 0.3888$\bullet$  &  0.4343  & 0.3236$\bullet$ \\
~&0.5 & \textbf{0.4550}  &  0.4396     & 0.4128$\bullet$  & 0.3746$\bullet$  &  0.4357  & 0.3885$\bullet$  & 0.3765$\bullet$  & 0.4336$\bullet$& 0.3196$\bullet$ \\
\bottomrule[1pt]

\multirow{5}*{ALOI}
~&0.1 & \textbf{0.5330}  & 0.4142$\bullet$ & 0.4715$\bullet$ & 0.4840$\bullet$ &  0.4961$\bullet$   & 0.4332$\bullet$  & 0.4803$\bullet$ & 0.4452$\bullet$   & 0.0500$\bullet$ \\
~&0.2 & \textbf{0.5303} & 0.4142$\bullet$  & 0.4947$\bullet$  & 0.4488$\bullet$  &  0.5228  & 0.4595$\bullet$  & 0.4455$\bullet$  &0.4206$\bullet$   & 0.1419$\bullet$  \\
~&0.3 & \textbf{0.5468}  & 0.4142$\bullet$  & 0.5051$\bullet$ & 0.4761$\bullet$     &  0.5436    & 0.4584$\bullet$  & 0.4690$\bullet$  & 0.4118$\bullet$   & 0.3876$\bullet$ \\
~&0.4 & \textbf{0.5497}  & 0.4142$\bullet$  & 0.5130$\bullet$  & 0.5091$\bullet$    & 0.5156$\bullet$  & 0.4474$\bullet$  & 0.5051$\bullet$ & 0.4181$\bullet$  & 0.1813$\bullet$  \\
~&0.5 & \textbf{0.5467}  & 0.4142$\bullet$  & 0.5215$\bullet$  & 0.5170$\bullet$   & 0.5189$\bullet$   & 0.4312$\bullet$  & 0.5162$\bullet$ & 0.3943$\bullet$   & 0.3111$\bullet$  \\
\bottomrule[1pt]

\multirow{5}*{USPS}
~&0.1 & \textbf{0.1555}  &  0.1356 & 0.0801$\bullet$   & 0.0438$\bullet$   & 0.0870$\bullet$   & 0.1292$\bullet$    & 0.0434$\bullet$  & 0.1109$\bullet$  & 0.0121$\bullet$ \\
~&0.2 & \textbf{0.1839}  & 0.1356$\bullet$  & 0.1304$\bullet$  & 0.0952$\bullet$  & 0.1658  & 0.1556$\bullet$    & 0.0953$\bullet$  & 0.1305$\bullet$   & 0.0188$\bullet$  \\
~&0.3 & \textbf{0.1933}  & 0.1356$\bullet$ & 0.1418$\bullet$ & 0.1343$\bullet$    &  0.1847  & 0.1620$\bullet$     & 0.1327$\bullet$  & 0.1369$\bullet$  & 0.0147$\bullet$ \\
~&0.4 & \textbf{0.1997}  & 0.1356$\bullet$  & 0.1415$\bullet$  & 0.1529$\bullet$  &  0.1803  &  0.1836  & 0.1556$\bullet$  & 0.1295$\bullet$   & 0.0155$\bullet$ \\
~&0.5 & \textbf{0.2190}  & 0.1356$\bullet$  & 0.1517$\bullet$ & 0.1775$\bullet$  & 0.1799$\bullet$ &  0.2002  & 0.1788$\bullet$  & 0.1332$\bullet$    & 0.0134$\bullet$ \\
\bottomrule[1pt]
\end{tabular}
}
\end{table*}

%%%%%%%%%%%%%%%%%%%%%%%%%%%%%%%%%%%%%

\begin{table*}[!htbp]
\centering
\caption{ARI values of different methods on eight datasets with different ratios of selected features.}\label{Table3}
\vspace{-0.2cm}
\resizebox{\textwidth}{!}{
\renewcommand\tabcolsep{5 pt}
%\small
\begin{tabular}{ccccccccccc}
\toprule[1pt]
Datasets & RFS &I$^2$MUFS &Allfeatures&LapScore&RNE&OMVFS&ASVW&CGMV-UFS & ASE-UFS&CoUFC \\
\midrule[0.5pt]
\multirow{5}*{3S}
~& 0.1  &\textbf{0.5135}  & 0.4246$\bullet$&0.3588$\bullet$& 0.3721$\bullet$ & 0.4442$\bullet$ & 0.4231$\bullet$  & 0.0488$\bullet$  & 0.4430$\bullet$ & 0.3921$\bullet$ \\
~& 0.2 & \textbf{0.5372} & 0.4246$\bullet$ &0.4971$\bullet$& 0.3918$\bullet$ & 0.4297$\bullet$& 0.4233$\bullet$  & 0.0111$\bullet$ & 0.5126$\bullet$& 0.3016$\bullet$ \\
~& 0.3& \textbf{0.5539} & 0.4246$\bullet$& 0.4340$\bullet$& 0.4548$\bullet$  & 0.5210$\bullet$  & 0.4551$\bullet$  & 0.0085$\bullet$& 0.4829$\bullet$ & 0.2807$\bullet$ \\
~& 0.4& \textbf{0.5439}  & 0.4246$\bullet$&0.5069$\bullet$ &0.5104$\bullet$  & 0.5229$\bullet$  & 0.4698$\bullet$  & 0.0033$\bullet$ & 0.4974$\bullet$  & 0.3458$\bullet$ \\
~& 0.5& \textbf{0.5692} & 0.4246$\bullet$ & 0.5080$\bullet$&0.5242$\bullet$  & 0.5226$\bullet$ & 0.4991$\bullet$  & 0.0186$\bullet$  & 0.5212$\bullet$ & 0.2778$\bullet$\\
\bottomrule[1pt]

\multirow{5}*{BBCS}
~&0.1    & \textbf{0.6034}  & 0.5735$\bullet$ & 0.1187$\bullet$  & 0.1110$\bullet$  & 0.3876$\bullet$  & 0.5967  & 0.1361$\bullet$  & 0.5216$\bullet$                         & 0.4537$\bullet$ \\
~&0.2 & \textbf{0.6417}       & 0.5735$\bullet$ & 0.0945$\bullet$  & 0.2503$\bullet$ & 0.3566$\bullet$ & 0.6021$\bullet$  & 0.0848$\bullet$  & 0.5673$\bullet$ & 0.3450$\bullet$ \\
~&0.3 & \textbf{0.6574}           & 0.5735$\bullet$  & 0.0925$\bullet$  & 0.2212$\bullet$  & 0.4222$\bullet$                         & 0.5460$\bullet$                          & 0.0791$\bullet$  & 0.6427  & 0.3648$\bullet$ \\
~&0.4  & \textbf{0.6418}               & 0.5735$\bullet$  & 0.1435$\bullet$  & 0.3455$\bullet$ & 0.4455$\bullet$  & 0.5644$\bullet$                          & 0.0652$\bullet$ & 0.5998$\bullet$                          & 0.3584$\bullet$ \\
~&0.5  & \textbf{0.6459}          & 0.5735$\bullet$  & 0.1683$\bullet$ & 0.4831$\bullet$  & 0.5026$\bullet$                         & 0.5453$\bullet$                        & 0.0444$\bullet$   & 0.5490$\bullet$                          & 0.3384$\bullet$\\
\bottomrule[1pt]

\multirow{5}*{WebW}
~&0.1  & \textbf{0.3699}  & 0.3579  & 0.1187$\bullet$  & 0.1503$\bullet$  & 0.1033$\bullet$  & 0.2802$\bullet$   & 0.1732$\bullet$      & 0.2195$\bullet$                          & 0.0537$\bullet$ \\
~&0.2 & \textbf{0.3896}                                  & 0.3579$\bullet$                         & 0.0945$\bullet$  & 0.2094$\bullet$ & 0.1860$\bullet$                        & 0.2890$\bullet$                         & 0.2191$\bullet$  & 0.2232$\bullet$                          & 0.1515$\bullet$ \\
~&0.3  & \textbf{0.4237}                                & 0.3579$\bullet$                         & 0.0925$\bullet$  & 0.3002$\bullet$ & 0.1853$\bullet$                        & 0.3482$\bullet$                         & 0.3442$\bullet$  & 0.2478$\bullet$  & 0.2823$\bullet$ \\
~&0.4 & \textbf{0.4235}                                  & 0.3579$\bullet$                          & 0.1435$\bullet$& 0.2117$\bullet$ & 0.2627$\bullet$                         & 0.3441$\bullet$                           & 0.3926$\bullet$  & 0.2585$\bullet$                         & 0.1814$\bullet$ \\
~&0.5 & \textbf{0.4256}                                  & 0.3579$\bullet$                         & 0.1683$\bullet$  & 0.2269$\bullet$ & 0.3257$\bullet$                       & 0.3736$\bullet$                         & 0.3842$\bullet$ & 0.2585$\bullet$                          & 0.1840$\bullet$\\
\bottomrule[1pt]

\multirow{5}*{BBCN}
~&0.1     & 0.5171         & \textbf{0.5345} & 0.1943$\bullet$  & 0.2474$\bullet$ & 0.3903$\bullet$ & 0.4375$\bullet$ & 0.4487$\bullet$  & 0.1495$\bullet$ & 0.3162$\bullet$ \\
~&0.2       & \textbf{0.5598} & 0.5345$\bullet$       & 0.2707$\bullet$  & 0.5157$\bullet$ & 0.4799$\bullet$  & 0.4894$\bullet$ & 0.5229$\bullet$  & 0.1566$\bullet$ & 0.3634$\bullet$ \\
~&0.3  & \textbf{0.5910}          & 0.5345$\bullet$         & 0.3147$\bullet$ & 0.4336$\bullet$  & 0.4424$\bullet$  & 0.5087$\bullet$   & 0.4994$\bullet$ & 0.1645$\bullet$ & 0.2823$\bullet$ \\
~&0.4  & \textbf{0.6324}          & 0.5345$\bullet$           & 0.2904$\bullet$  & 0.4312$\bullet$ & 0.4209$\bullet$  & 0.4946$\bullet$ & 0.5071$\bullet$ & 0.1630$\bullet$ & 0.2736$\bullet$ \\
~&0.5 & \textbf{0.6566}      & 0.5345$\bullet$          & 0.3025$\bullet$  & 0.4425$\bullet$  & 0.3804$\bullet$ & 0.5394$\bullet$  & 0.5057$\bullet$ & 0.1636$\bullet$ & 0.2450$\bullet$\\
\bottomrule[1pt]

\multirow{5}*{Caltech101-7}
~&0.1  & \textbf{0.3805} & 0.2740$\bullet$ & 0.2806$\bullet$                         & 0.1422$\bullet$ & 0.3478$\bullet$                         & 0.2812$\bullet$ & 0.2064$\bullet$  & 0.0575$\bullet$  & 0.2935$\bullet$ \\
~&0.2  & \textbf{0.3590}  & 0.2740$\bullet$ & 0.2292$\bullet$                         & 0.2056$\bullet$ & 0.3510  & 0.2548$\bullet$& 0.2087$\bullet$  & 0.0516$\bullet$ & 0.3026$\bullet$ \\
~&0.3  & \textbf{0.3422} & 0.2740$\bullet$ & 0.2434$\bullet$                          & 0.2232$\bullet$  &  0.3413  & 0.2553$\bullet$  & 0.2212$\bullet$  & 0.0515$\bullet$  & 0.2746$\bullet$ \\
~&0.4 & \textbf{0.3339} & 0.2740$\bullet$ & 0.2975$\bullet$                          & 0.2202$\bullet$ & 0.3072$\bullet$                         & 0.2490$\bullet$ & 0.2313$\bullet$ & 0.0496$\bullet$  & 0.2733$\bullet$ \\
~&0.5  & \textbf{0.3098} & 0.2740$\bullet$  &  0.2902  & 0.2260$\bullet$  &  0.3113  & 0.2830$\bullet$  & 0.2250$\bullet$ & 0.0480$\bullet$ & 0.2794$\bullet$\\
\bottomrule[1pt]

\multirow{5}*{UCIDigits}
~&0.1        & 0.2709             & \textbf{0.2918}    & 0.2462$\bullet$  & 0.2194$\bullet$   & 0.2361$\bullet$   & 0.0274$\bullet$  & 0.2140$\bullet$ & 0.1585$\bullet$ & 0.1807$\bullet$ \\
~&0.2       & \textbf{0.2980}    &  0.2918  & 0.2467$\bullet$  & 0.2419$\bullet$   & 0.2661$\bullet$    & 0.1254$\bullet$  & 0.2424$\bullet$ & 0.1828$\bullet$  & 0.2012$\bullet$ \\
~&0.3 & \textbf{0.3176}  & 0.2918$\bullet$  & 0.2537$\bullet$  & 0.2508$\bullet$   &  0.3098  & 0.2018$\bullet$  & 0.2338$\bullet$  & 0.1980$\bullet$ & 0.2277$\bullet$ \\
~&0.4  & \textbf{0.3116} & 0.2918 & 0.2855$\bullet$  & 0.2323$\bullet$ & 0.2864$\bullet$ & 0.2106$\bullet$  & 0.2447$\bullet$ & 0.1864$\bullet$  & 0.2187$\bullet$ \\
~&0.5 & \textbf{0.3156}  & 0.2918$\bullet$   & 0.2724$\bullet$  & 0.2323$\bullet$&  0.2961 & 0.2528$\bullet$ & 0.2336$\bullet$& 0.1902$\bullet$  & 0.2242$\bullet$\\
\bottomrule[1pt]

\multirow{5}*{ALOI}
~&0.1  & \textbf{0.1629}  & 0.0968$\bullet$ & 0.1357$\bullet$ &  0.1429  &  0.1548  & 0.1079$\bullet$ & 0.1411$\bullet$ &0.1181$\bullet$   & 0.0876$\bullet$ \\
~&0.2  & \textbf{0.1972} & 0.0968$\bullet$  & 0.1721$\bullet$  & 0.1263$\bullet$  & 0.1642$\bullet$  & 0.1243$\bullet$  & 0.1245$\bullet$ & 0.1034$\bullet$   & 0.0948$\bullet$ \\
~&0.3 & \textbf{0.2104}  & 0.0968$\bullet$  & 0.1760$\bullet$  & 0.1343$\bullet$     & 0.1674$\bullet$   & 0.1221$\bullet$  & 0.1311$\bullet$  & 0.0964$\bullet$   & 0.1118$\bullet$ \\
~&0.4  & \textbf{0.2125} & 0.0968$\bullet$  & 0.1763$\bullet$ & 0.1626$\bullet$   & 0.1745$\bullet$ & 0.1210$\bullet$  & 0.1585$\bullet$  &0.1056$\bullet$   & 0.0236$\bullet$ \\
~&0.5  & \textbf{0.2092}  & 0.0968$\bullet$ & 0.1714$\bullet$  & 0.1659$\bullet$    & 0.1759$\bullet$  & 0.1202$\bullet$  & 0.1662$\bullet$   & 0.1015$\bullet$   & 0.0687$\bullet$ \\
\bottomrule[1pt]

\multirow{5}*{USPS}
~&0.1  & \textbf{0.0842} &  0.0828  & 0.0335$\bullet$   & 0.0027$\bullet$   & 0.0177$\bullet$  & 0.0775  & 0.0026$\bullet$  & 0.0632$\bullet$ & 0.0046$\bullet$ \\
~&0.2& \textbf{0.0896} &  0.0828  &  0.0709  & 0.0400$\bullet$  &  0.0768   & 0.0800   & 0.0397$\bullet$    & 0.0820   & 0.0061$\bullet$ \\
~&0.3  & \textbf{0.0935}  &  0.0828  &  0.0742  & 0.0728$\bullet$    &  0.0820    & 0.0909  & 0.0724$\bullet$  & 0.0847   & 0.0072$\bullet$ \\
~&0.4  & \textbf{0.0977}  &  0.0828  & 0.0751$\bullet$  &  0.0908 &  0.0817  & 0.0927   &  0.0905 & 0.0960  & 0.0052$\bullet$\\
~&0.5  & \textbf{0.1185}   & 0.0828$\bullet$  & 0.0810$\bullet$&  0.1041& 0.0819$\bullet$& 0.1054 &  0.1056   &0.0878$\bullet$   & 0.0040$\bullet$ \\
\bottomrule[1pt]
\end{tabular}
}
\end{table*}

%%%%%%%%%%%%%%%%%%%%%%%%%%%%%%%%%%%%%

\begin{table*}[!htbp]
\centering
\caption{F-Measure values of different methods on eight datasets with different ratios of selected features.}\label{Table4}
\vspace{-0.2cm}
\resizebox{\textwidth}{!}{
\renewcommand\tabcolsep{5 pt}
%\small
\begin{tabular}{ccccccccccc}
\toprule[1pt]
Datasets & RFS &I$^2$MUFS &Allfeatures&LapScore&RNE&OMVFS&ASVW&CGMV-UFS & ASE-UFS&CoUFC \\
\midrule[0.5pt]
\multirow{5}*{3S}
~& 0.1  &\textbf{0.7037}    &0.6301$\bullet$    &0.6201 $\bullet$   &0.5887 $\bullet$   &0.6322 $\bullet$   &0.5962 $\bullet$   &0.4280 $\bullet$   &0.6701     $\bullet$&0.5014 $\bullet$ \\
~& 0.2 &\textbf{0.7177}     &0.6301 $\bullet$   &0.6777 $\bullet$   &0.6429 $\bullet$   &0.6483 $\bullet$   &0.6334 $\bullet$   &0.4058 $\bullet$   &0.6843 $\bullet$   &0.5480 $\bullet$ \\
~& 0.3& \textbf{0.7215} &   0.6301$\bullet$&    0.6527 $\bullet$&   0.6864 $\bullet$&   0.7124 &    0.6170 $\bullet$    &0.4122$\bullet$ &  0.6871  $\bullet$&0.5256$\bullet$ \\
~& 0.4& \textbf{0.7277}  &  0.6301 $\bullet$ &  0.6781 $\bullet$     &0.7112     &0.7196  & 0.6556  $\bullet$&  0.3994$\bullet$  &  0.6934  $\bullet$&  0.5166 $\bullet$ \\
~& 0.5& \textbf{0.7389} &   0.6301$\bullet$&    0.6721$\bullet$ &   0.7037 $\bullet$&   0.7191 &    0.6451$\bullet$ &   0.4119 $\bullet$&   0.7064 $\bullet$&   0.5170$\bullet$ \\
\bottomrule[1pt]

\multirow{5}*{BBCS}
~&0.1    & \textbf{0.7991} &    0.7741 $\bullet$  &    0.4702$\bullet$ &   0.4736$\bullet$ &   0.6805 $\bullet$&   0.6482 $\bullet$&   0.5340 $\bullet$&   0.7757 $\bullet$  &    0.5041 $\bullet$ \\
~&0.2 &\textbf{0.8068} &    0.7741$\bullet$ &   0.4271$\bullet$ &   0.5944$\bullet$ &   0.6779 $\bullet$&   0.6260 $\bullet$&   0.4884 $\bullet$&   0.7582 $\bullet$&   0.6556 $\bullet$\\
~&0.3 &\textbf{0.8219}  &0.7741 $\bullet$&  0.4142 $\bullet$&   0.5658 $\bullet$&   0.7167$\bullet$&    0.6745 $\bullet$&   0.4783$\bullet$ &   0.7677 $\bullet$&   0.5565 $\bullet$\\
~&0.4  & \textbf{0.8176}    &0.7741 $\bullet$   &0.5076$\bullet$    &0.6645 $\bullet$   &0.7220$\bullet$    &0.6813 $\bullet$   &0.4587 $\bullet$   &0.7845 $\bullet$   &0.5739 $\bullet$ \\
~&0.5  & \textbf{0.8169}    &0.7741 $\bullet$   &0.4993$\bullet$    &0.7405$\bullet$    &0.7504 $\bullet$   &0.7179 $\bullet$   &0.4424 $\bullet$   &0.7882 $\bullet$   &0.6217 $\bullet$\\
\bottomrule[1pt]

\multirow{5}*{WebW}
~&0.1  & \textbf{0.6970}    &0.6924     &0.4702 $\bullet$   &0.4373 $\bullet$   &0.5327 $\bullet$   &0.6282 $\bullet$   &0.5580 $\bullet$   &0.3402 $\bullet$   &0.5128 $\bullet$\\
~&0.2 & \textbf{0.7027} &   0.6924  & 0.4271$\bullet$ &     0.4613$\bullet$     & 0.5765    $\bullet$& 0.5958   $\bullet$& 0.6102 $\bullet$ & 0.3402    $\bullet$& 0.4895 $\bullet$ \\
~&0.3  & \textbf{0.7206}    &0.6924     $\bullet$&0.4142    $\bullet$&0.4859 $\bullet$  &0.5913     $\bullet$&0.6798 $\bullet$  &0.6804 $\bullet$   &0.3489 $\bullet$   &0.5001 $\bullet$\\
~&0.4 & \textbf{0.7283}     &0.6924 $\bullet$   &0.5076     $\bullet$&0.4767 $\bullet$  &0.6521     $\bullet$&0.6648 $\bullet$  &0.6952 $\bullet$   &0.3450 $\bullet$   &0.4856 $\bullet$ \\
~&0.5 & \textbf{0.7276}     &0.6924 $\bullet$   &0.4993 $\bullet$   &0.4648 $\bullet$   &0.6599 $\bullet$   &0.6782     $\bullet$&0.6939 $\bullet$  &0.3447 $\bullet$   &0.5579$\bullet$ \\
\bottomrule[1pt]

\multirow{5}*{BBCN}
~&0.1     & 0.7412  & \textbf{0.7640} $\bullet$ & 0.5398$\bullet$   & 0.5051 $\bullet$  & 0.6470 $\bullet$  & 0.5690 $\bullet$  & 0.7201 $\bullet$  & 0.3402    $\bullet$& 0.3346 $\bullet$ \\
~&0.2       &\textbf{0.7740}    &0.7640     &0.6164$\bullet$    &0.7248$\bullet$    &0.7123 $\bullet$   &0.5955 $\bullet$   &0.7635     &0.3402 $\bullet$   &0.3181 $\bullet$ \\
~&0.3  & \textbf{0.7872}    &0.7645 &0.6385 $\bullet$   &0.6945 $\bullet$   &0.7022 $\bullet$   &0.6420$\bullet$    &0.7474 $\bullet$   &0.3489 $\bullet$   &0.3384$\bullet$ \\
~&0.4  & \textbf{0.8012}    &0.7640 $\bullet$   &0.5959 $\bullet$   &0.6911 $\bullet$   &0.6851 $\bullet$   &0.6781 $\bullet$   &0.7491 $\bullet$   &0.3450 $\bullet$   &0.2940 $\bullet$\\
~&0.5 & \textbf{0.8096}      & 0.7640   $\bullet$ & 0.6034 $\bullet$ &  0.6945 $\bullet$     & 0.6639   $\bullet$ & 0.6943  $\bullet$ & 0.7487 $\bullet$     & 0.3447 $\bullet$  & 0.3334 $\bullet$\\
\bottomrule[1pt]

\multirow{5}*{Caltech101-7}
~&0.1  & \textbf{0.6282}    &0.5346 $\bullet$   &0.5084 $\bullet$   &0.4423 $\bullet$   &0.6022 $\bullet$   &0.4823     $\bullet$&0.5029 $\bullet$  &0.2394 $\bullet$   &0.5136 $\bullet$ \\
~&0.2  & \textbf{0.6070}    &0.5346 $\bullet$   &0.4880 $\bullet$   &0.4823 $\bullet$   &0.6022     &0.5040 $\bullet$   &0.5007 $\bullet$   &0.2198 $\bullet$   &0.5182 $\bullet$\\
~&0.3  & \textbf{0.5897}     &0.5346 $\bullet$   &0.5232$\bullet$ &  0.4958 $\bullet$&   0.5690$\bullet$     &0.5249 $\bullet$   &0.4980 $\bullet$   &0.2189 $\bullet$   &0.5384 $\bullet$\\
~&0.4 & \textbf{0.5832} &   0.5346 $\bullet$    &0.5585 $\bullet$   &0.4890 $\bullet$&  0.5680$\bullet$ &   0.5171 $\bullet$&   0.5176 $\bullet$    &0.2197 $\bullet$   &0.5281 $\bullet$\\
~&0.5  & \textbf{0.5783}    &0.5346 $\bullet$   &0.5555 $\bullet$   &0.5128 $\bullet$   &0.5678     &0.5028 $\bullet$   &0.5081     $\bullet$&0.2154 $\bullet$  &0.5308$\bullet$ \\
\bottomrule[1pt]

\multirow{5}*{UCIDigits}
~&0.1        & \textbf{0.4796}  &0.4628  &0.4398 $\bullet$   &0.4199 $\bullet$   &0.4311 $\bullet$   &0.1874 $\bullet$   &0.4111 $\bullet$   &0.3361 $\bullet$   &0.1612 $\bullet$ \\
~&0.2       &\textbf{0.4845} &0.4628  $\bullet$ &0.4417 $\bullet$    &0.4406     $\bullet$&0.4642 $\bullet$ &0.2629 $\bullet$   &0.4399 $\bullet$   &0.3475 $\bullet$   &0.3880$\bullet$ \\
~&0.3 & \textbf{0.4992}&    0.4628 $\bullet$    &0.4435$\bullet$    &0.4402 $\bullet$   &0.4734     $\bullet$&0.4112 $\bullet$  &0.4292 $\bullet$   &0.3661 $\bullet$&  0.4290 $\bullet$\\
~&0.4  &\textbf{0.4697} &   0.4628 &    0.4614  &0.4263 $\bullet$   &0.4625     &0.4193 $\bullet$   &0.4400 $\bullet$   &0.3636 $\bullet$   &0.2851 $\bullet$ \\
~&0.5 & \textbf{0.4681}     &0.4628     &0.4553 $\bullet$   &0.4244 $\bullet$   &0.4619     &0.4488 $\bullet$   &0.4307 $\bullet$   &0.3507 $\bullet$   &0.4355 $\bullet$\\
\bottomrule[1pt]

\multirow{5}*{ALOI}
~&0.1  & \textbf{0.4068}    &0.3457     $\bullet$&0.2766 $\bullet$  &0.3267 $\bullet$   &0.3180 $\bullet$   &0.2679 $\bullet$   &0.3219 $\bullet$   &0.2668 $\bullet$   &0.1321 $\bullet$ \\
~&0.2  &\textbf{0.3711}     &0.3457 $\bullet$   &0.3566     &0.2779 $\bullet$   &0.3556 $\bullet$   &0.2975 $\bullet$   &0.2754 $\bullet$   &0.2457 $\bullet$   &0.1702 $\bullet$ \\
~&0.3 & \textbf{0.3794} &    0.3457 $\bullet$&   0.3618     &0.3111 $\bullet$   &0.3807     &0.2952 $\bullet$   &0.3049 $\bullet$   &0.2181 $\bullet$   &0.2100 $\bullet$ \\
~&0.4  & 0.3748     &0.3457 $\bullet$   &0.3915     &0.3475 $\bullet$   &\textbf{0.3941}    &0.2817 $\bullet$   &0.3424 $\bullet$   &0.2587     $\bullet$&0.1623 $\bullet$ \\
~&0.5  & 0.3730     &0.3457     $\bullet$&0.3847$\bullet$   &0.3527 $\bullet$   &\textbf{0.4005}    &0.2587 $\bullet$   &0.3534 $\bullet$   &0.2302 $\bullet$   &0.1394 $\bullet$\\
\bottomrule[1pt]

\multirow{5}*{USPS}
~&0.1  & 0.3320     &\textbf{0.3364}    &0.2202$\bullet$    &0.1958 $\bullet$   &0.2177 $\bullet$   &0.2467 $\bullet$   &0.1962 $\bullet$   &0.2134     $\bullet$&0.1800 $\bullet$\\
~&0.2& 0.3219   &\textbf{0.3364}    &0.2683 $\bullet$   &0.2460 $\bullet$&  0.2738 $\bullet$    &0.2819 $\bullet$   &0.2459 $\bullet$   &0.2308     $\bullet$&0.1931 $\bullet$ \\
~&0.3  & \textbf{0.3465}    &0.3364     &0.2738 $\bullet$   &0.2656 $\bullet$   &0.2879 $\bullet$   &0.2854 $\bullet$   &0.2656 $\bullet$   &0.2357 $\bullet$   &0.1747 $\bullet$ \\
~&0.4  & \textbf{0.3419}    &0.3364     &0.2713 $\bullet$   &0.2823 $\bullet$   &0.2877 $\bullet$   &0.3051 $\bullet$   &0.2841 $\bullet$   &0.2437     $\bullet$&0.1852$\bullet$ \\
~&0.5  & \textbf{0.3516}    &0.3364 $\bullet$   &0.2775 $\bullet$   &0.2908$\bullet$ &  0.2853$\bullet$&    0.3414  &0.2942 $\bullet$   &0.2039 $\bullet$   &0.1826 $\bullet$\\
\bottomrule[1pt]
\end{tabular}
}
\end{table*}

\begin{figure*}[!htbp]
\centering
\includegraphics[width=\textwidth]{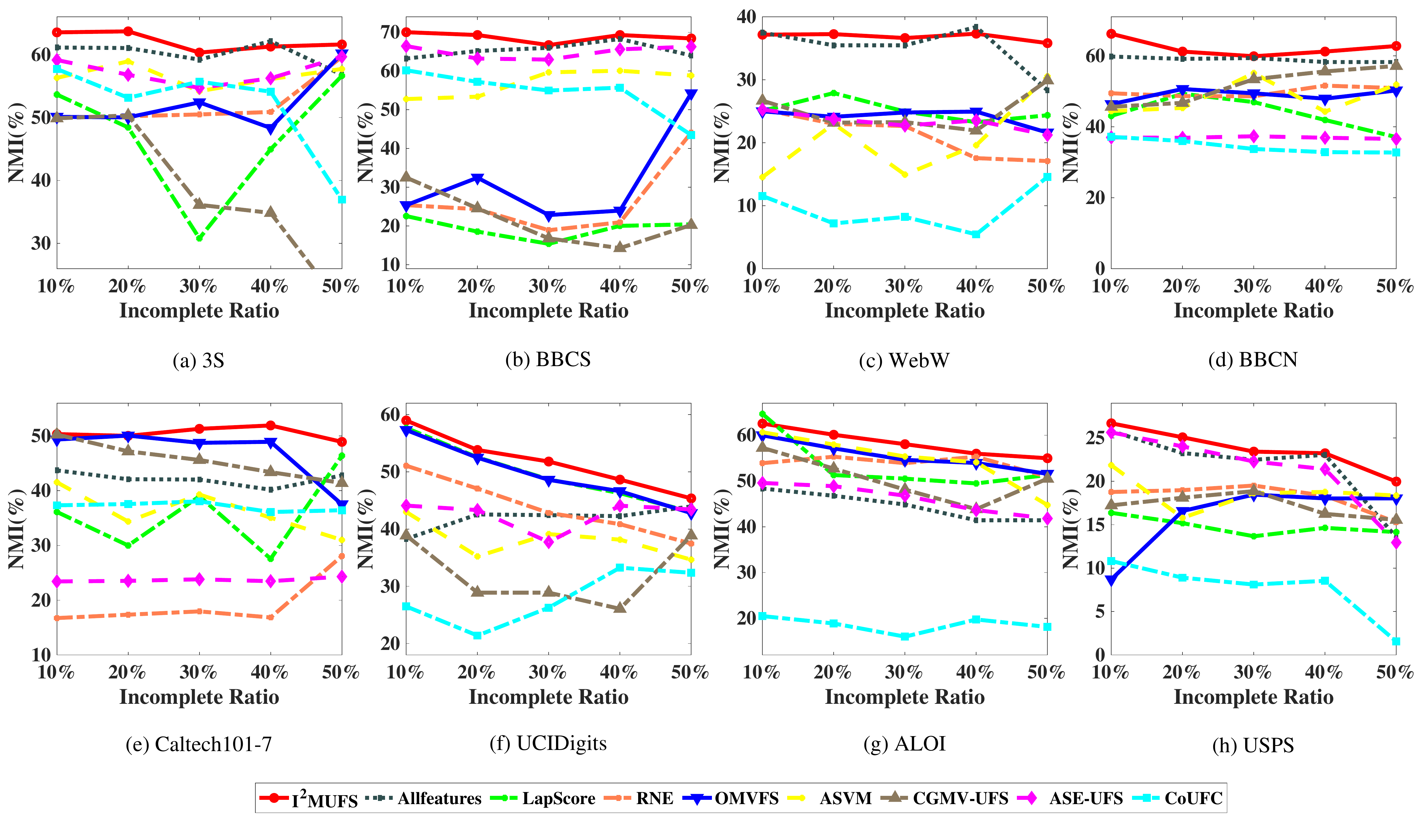}
\caption{NMI of different methods on eight datasets versus different incomplete ratios.}\label{NMI}
\end{figure*}

\begin{figure*}[!htbp]
\centering
\includegraphics[width=\textwidth]{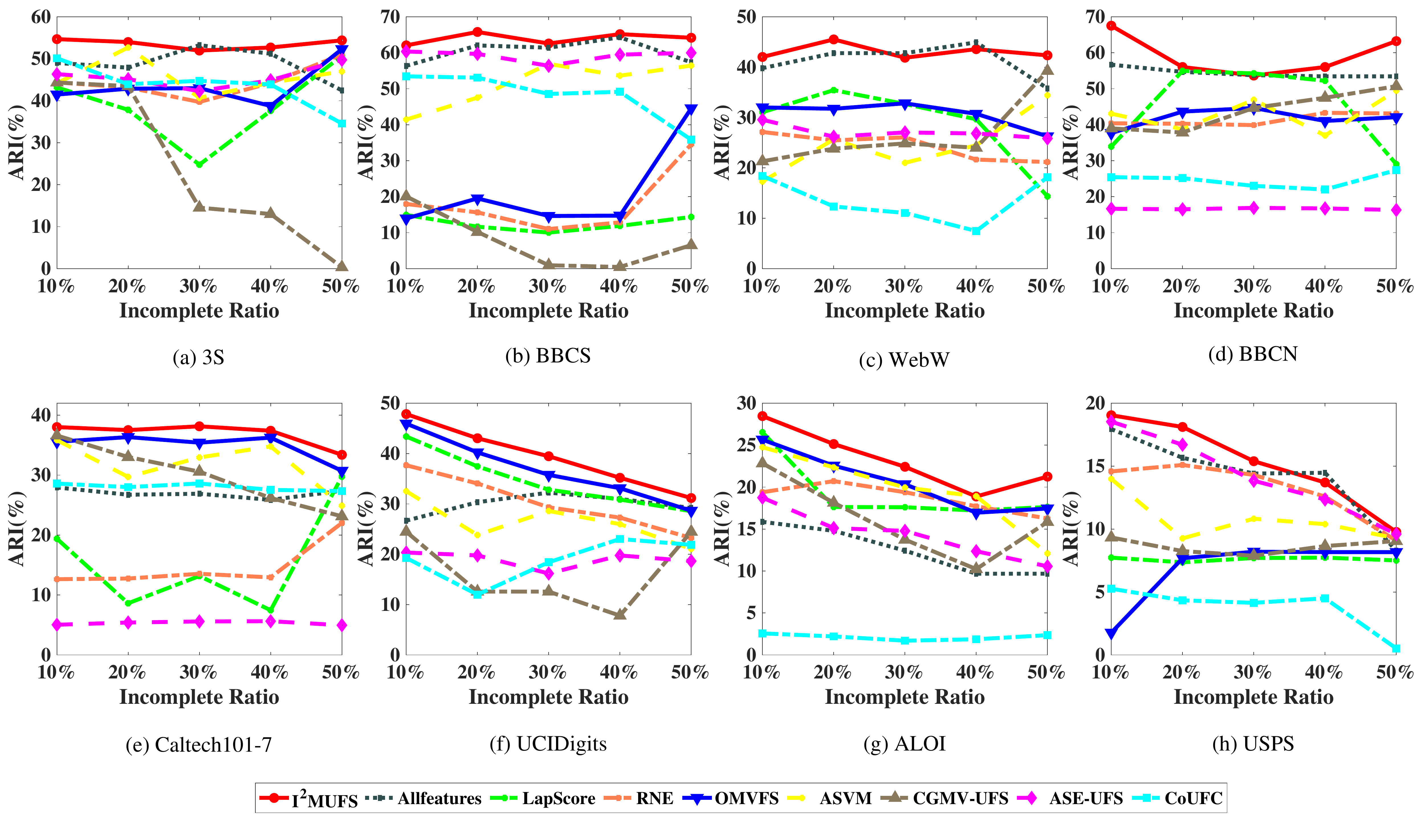}
\caption{ARI of different methods on eight datasets  versus different incomplete ratios.}\label{ARI}
\end{figure*}

\begin{figure*}[!htbp]
\centering
\includegraphics[width=\textwidth]{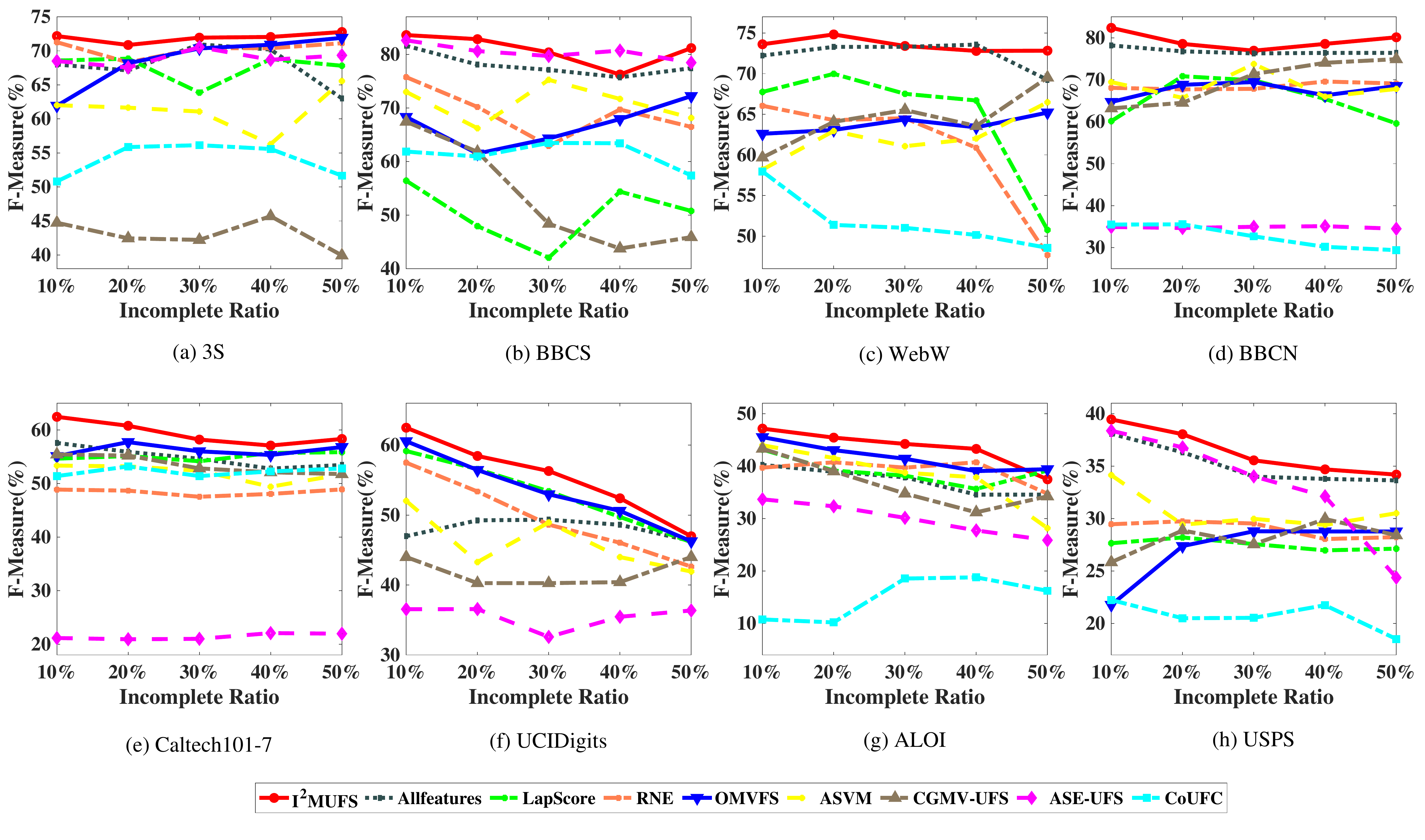}
\caption{F-Measure of different methods on eight datasets  versus different incomplete ratios.}\label{FMeasure}
\end{figure*}

\subsection{Clustering Performance Comparison}
In this subsection, we evaluate the performance of the proposed method in the clustering task. Tables~\ref{Table2},~\ref{Table3} and \ref{Table4} respectively show the clustering results NMI, ARI and F-Measure of compared methods with different selected feature ratios (RFS) while the incomplete instance ratio is set to 50\%. The best experimental result in Tables~\ref{Table2},~\ref{Table3} and \ref{Table4} is highlighted with bold-face type. Besides, with the significance level of 0.05, we employ the paired $t$-test to test the significant difference between the proposed method I$^2$MUFS and other compared methods. The value with $\bullet$ in these three tables indicates that I$^2$MUFS is statistically superior to other method.  From this table, we can see that the proposed method I$^2$MUFS outperforms all other methods on all datasets with different feature selection ratios in most of the time. As to 3S, WebW and Caltech 101-7 datasets, our method I$^2$MUFS performs the best in comparison with other methods and achieves almost 10\% average improvement in terms of NMI, ARI and F-Measure. As to ALOI dataset, I$^2$MUFS gets the best performance according to NMI and ARI. As to  BBCS, BBCN, UCIDigits and USPS datasets, our method  is still better than other methods except for Allfeatures on the feature selection ratios 10\% and 20\%.  Notice that Allfeature is not a feature selection method. Besides, on these four datasets, I$^2$MUFS gets almost the same performance comparing with Allfeatures even with very small ratios of features (10\% or 20\%). Figs.~\ref{NMI},~\ref{ARI} and \ref{FMeasure} respectively show the  NMI, ARI and F-Measure values of all methods on eight datasets with different incomplete instance ratios while fixing 40\% of selected features. It is easy to observe that I$^2$MUFS is still better than other methods in most of the time. In conclusion, these experimental results show that our method I$^2$MUFS outperforms other compared methods in most cases. The reason of I$^2$MUFS with the promising performance is that our method simultaneously considers the consensus and complementary information across different views, where the learned consensus clustering indicator matrix and the fused latent feature matrices with adaptive view weights are beneficial to improve the performance of feature selection.

\begin{figure*}[!htbp]
\centering
\includegraphics[scale=0.32]{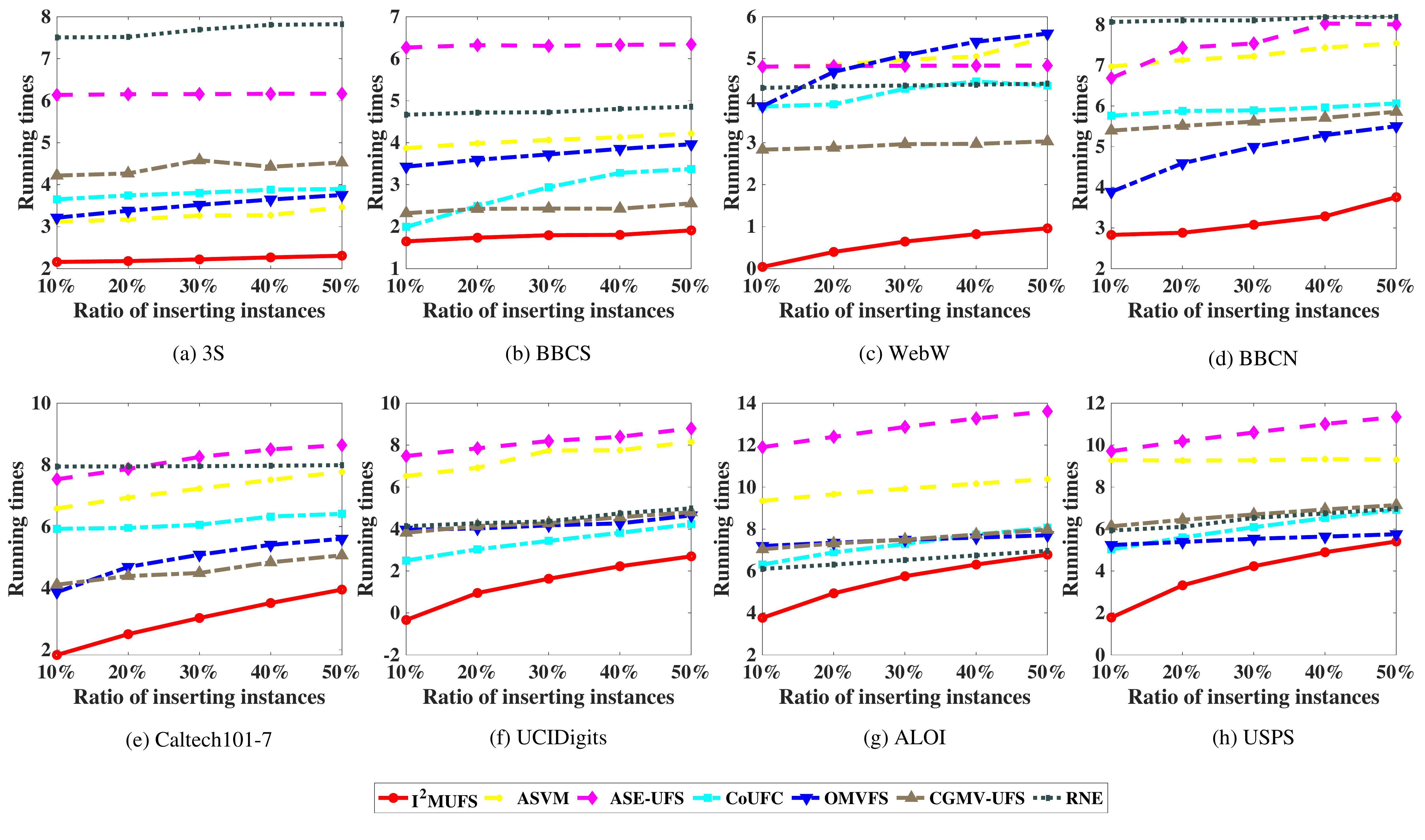}
\caption{Comparison of running time between I$^2$MUFS and other methods versus different inserting ratios.}\label{RunningTime}
\end{figure*}

\subsection{Computational Efficiency Comparison}
In this subsection, we compare the computational cost between the proposed method I$^2$MUFS and other baseline methods while streaming data arrive incrementally. For each dataset in Table~\ref{Table1}, we randomly select 50\% instances from the original dataset  as the initial data and insert different ratios of instances derived from the remaining 50\% instances into the initial data. Let the ratio of inserted instances vary from 10\% to 50\% with a step of 10\% and the new added instances are divided into 5 chunks of equal size. Then we compute the running times of these methods addressing different ratios of new added data in the form of streams. All experiment are repeated 30 times under the same conditions and the average running times are recorded.

Fig.~\ref{RunningTime} shows the detailed change trend of running time w.r.t. different multi-view feature selection methods on eight datasets while inserting different ratios of instances. For each sub-figure in Fig.~\ref{RunningTime}, the $x$-coordinate pertains to the inserting ratio and the $y$-coordinate pertains to the logarithm value of the running times w.r.t. different compared methods. From Fig.~\ref{RunningTime}, it is easy to see that the running times of various methods rise with the increase of the inserting ratios. However, the running time of the proposed method I$^2$MUFS is always faster than that of other compared methods. Furthermore, we use the incremental speedup ratio~\cite{speedup} to show the computational performance of compared methods. The incremental speedup ratio is defined as $IncS = \frac{T_C}{T_I}$, where $T_C$ is the running time of the compared method and $T_I$ is the running time of I$^2$MUFS. The larger value of $IncS$ is, the higher the efficiency of the proposed method. Table~\ref{SpeedupRatio} shows the results of incremental speedup ratio. As shown in Table~\ref{SpeedupRatio}, our method yields 1.4141-73.7824$\times$ speedup over the incremental multi-view feature selection method OMVFS. Comparing with other non-incremental feature selection methods, our method achieves high speedup ratio ($\geq 10\times$) on most of datasets. Hence, the proposed method I$^2$MUFS can effectively improve the computational efficiency in comparison with other feature selection methods while streaming data arrive incrementally.

\begin{table*}[!htbp]
\tabcolsep 0pt
\caption{Incremental speedup ratio with inserting different ratios of instances.}\small \label{SpeedupRatio}
\vspace*{-5pt}
    \begin{flushleft}
    \def\temptablewidth{\textwidth}
        {\rule{\temptablewidth}{1pt}}
        \begin{spacing}{1}
        \begin{tabular*}{\temptablewidth}{@{\extracolsep{\fill}}cccccccc}
            %\hline
      \textbf{Datasets} & \textbf{Adding ratio} & \textbf{RNE} & \textbf{OMVFS} & \textbf{ASVW }  & \textbf{CGMV-UFS } & \textbf{ASE-UFS} & \textbf{CoUFC}\\
       \hline
      \multirow{5}{*}{3S} & 10\% & 210.3975 & 2.8700 & 2.6142 & 7.8281  & 53.4830 & 4.4462 \\
&20\% & 208.3905 & 3.3165 & 2.6905 & 8.0659  & 53.2982 & 4.7845 \\
&30\% & 238.7122 & 3.6705 & 2.8435 & 10.6195 & 51.4127 & 4.8902 \\
&40\% & 255.2657 & 3.9707 & 2.7363 & 8.6578  & 49.3051 & 5.0244 \\
&50\% & 248.6564 & 4.2489 & 3.1822 & 9.2201  & 47.4903 & 4.9048\\
\hline
     \multirow{5}{*}{BBCS} &10\% & 492.1154 & 5.9314 & 9.2223  & 1.9575 & 101.4981 & 1.4108 \\
&20\% & 454.5074 & 6.4005 & 9.4846  & 1.9878 & 98.3053  & 2.1329 \\
&30\% & 429.6427 & 6.8586 & 9.6815  & 1.8854 & 91.1324  & 3.1381 \\
&40\% & 434.8895 & 7.7542 & 10.2561 & 1.8637 & 92.4565  & 4.3842 \\
&50\% & 393.6031 & 7.7872 & 10.0834 & 1.9018 & 84.1765  & 4.3180\\
\hline
 \multirow{5}{*}{WebW} &10\% & 711.2584 & 4.7165 & 116.9978 & 16.3494 & 118.2070 & 45.6664 \\
&20\% & 514.5755 & 3.8024 & 86.6611  & 11.9652 & 83.7772  & 33.6952 \\
&30\% & 412.9379 & 3.3407 & 131.7110 & 10.1994 & 65.9671  & 38.2360 \\
&40\% & 353.4262 & 3.2154 & 63.7243  & 8.5840  & 55.4541  & 38.0132 \\
&50\% & 313.2453 & 3.1396 & 60.2262  & 7.9479  & 48.3865  & 29.9713\\
\hline
 \multirow{5}{*}{BBCN} &  10\% & 188.0958 & 2.8830 & 62.9571 & 12.9986 & 47.3027  & 18.7380 \\
&20\% & 185.1365 & 5.5209 & 69.9184 & 13.8352 & 94.5923  & 19.9859 \\
&30\% & 152.2499 & 6.7918 & 63.1076 & 12.6339 & 86.4621  & 16.6732 \\
&40\% & 133.9070 & 7.3666 & 63.3662 & 11.2666 & 114.5361 & 14.5823 \\
&50\% & 84.3822  & 5.7230 & 44.0993 & 8.1621  & 70.0394  & 10.0356\\
\hline
 \multirow{5}{*}{Caltech101-7} & 10\% & 450.9037 & 7.6687 & 115.7769 & 9.7709 & 298.5746 & 59.8759 \\
&20\% & 230.5780 & 8.8458 & 83.9281  & 6.5866 & 212.3521 & 31.4037 \\
&30\% & 137.5220 & 7.7793 & 66.5833  & 4.3006 & 185.4691 & 20.5628 \\
&40\% & 85.7938  & 6.6005 & 54.2774  & 3.7436 & 145.9295 & 16.5481 \\
&50\% & 56.6196  & 5.1990 & 45.4668  & 3.0371 & 108.5071 & 11.6938\\
     \hline
 \multirow{5}{*}{UCIDigits} & 10\% & 87.2360 & 73.7824 & 956.4561 & 63.9149 & 2473.8623 & 17.0240 \\
&20\% & 27.7769 & 21.6282 & 390.9540 & 24.2648 & 986.0699  & 8.0076  \\
&30\% & 15.4289 & 12.7225 & 452.9216 & 14.1115 & 708.7078  & 6.0862  \\
&40\% & 12.4575 & 7.7186  & 252.8110 & 10.3597 & 479.4795  & 4.9307  \\
&50\% & 9.8592  & 7.0273  & 233.1481 & 8.1774  & 444.7255  & 4.6598 \\
      \hline
 \multirow{5}{*}{ALOI} & 10\% &10.2959   & 30.6821 & 264.9053 & 26.3096 & 3409.6980  & 12.7351 \\
&20\% & 3.9254  & 11.1198 & 112.4695 & 10.7385 & 1734.0379  & 7.0359  \\
&30\% &2.1700   & 5.6827  & 64.7829  & 5.7175  & 1228.0914  & 4.6587  \\
&40\% & 1.5457  & 3.6040  & 47.3404  & 4.1753  & 1057.2306  & 4.1869  \\
&50\% & 1.1943  & 2.5154  & 36.6580  & 3.2767  & 921.3091  & 3.5870 \\
      \hline
 \multirow{5}{*}{USPS} & 10\% & 45.7086 & 31.7256 & 1806.6021 & 77.3680 & 2774.2346  & 25.7583 \\
&20\% & 11.6403 & 7.9029  & 384.5124  & 22.6896 & 962.8740  & 9.7976  \\
&30\% & 5.5454  & 3.6935  & 155.4301  & 11.7434 &585.1557  & 6.4159  \\
&40\% & 3.7571  & 2.1052  & 84.8838   & 7.6687  & 451.6141  & 5.0866  \\
&50\% & 2.9924  & 1.4141  & 49.5503   & 5.6639  & 379.1854  & 4.5374 \\
	\hline
            %\hline
        \end{tabular*}
        {\rule{\temptablewidth}{1pt}}
        \end{spacing}
        \end{flushleft}
\end{table*}
\subsection{Convergence Behavior  and Parameter Sensitivity}
In this subsection, we study the convergence behavior and parameter sensitivity of the proposed method I$^2$MUFS. We have proven the convergence of the proposed method  I$^2$MUFS in Section 5.1. Here, we discuss its convergence speed experimentally.  Fig.~\ref{Convergence2} shows the objective function values of I$^2$MUFS with different number of  iterations on  eight datasets.  As can be seen from this figure, the convergence curve decreases very fast within about 50 iterations, which indicates the proposed method with fast convergence speed empirically.

There are four hyper-parameters $\lambda$, $\beta^{(v)}$, $\theta^{(v)}$ and $\eta^{(v)}$ in our method. To investigate the parameter sensitivity, we perform the proposed algorithm I$^2$MUFS on the Caltech101-7 dataset  while varying a parameter and fixing other rest parameters. On other datasets, there exist the similar results. Fig.~\ref{ParameterSensitivity} shows the NMI, ARI and F-Measure values of I$^2$MUFS with regards to different parameters and the feature selection ratio. From this figure, it is easy to see that NMI, ARI and F-Measure have a small fluctuation range when $\beta^{(v)}$, $\theta^{(v)}$ and $\eta^{(v)}$ change. Namely, I$^2$MUFS is insensitive to these three parameters. Besides, when $\lambda$ changes, the fluctuations of ARI and F-Measure are severer than that of NMI, which means that the ARI and F-Measure values are relatively sensitive to $\lambda$. Hence, we need tune the parameter $\lambda$ to achieve better ARI and F-Measure by using the grid search.

\begin{figure*}[!htbp]
\begin{center}
\includegraphics[scale=0.32]{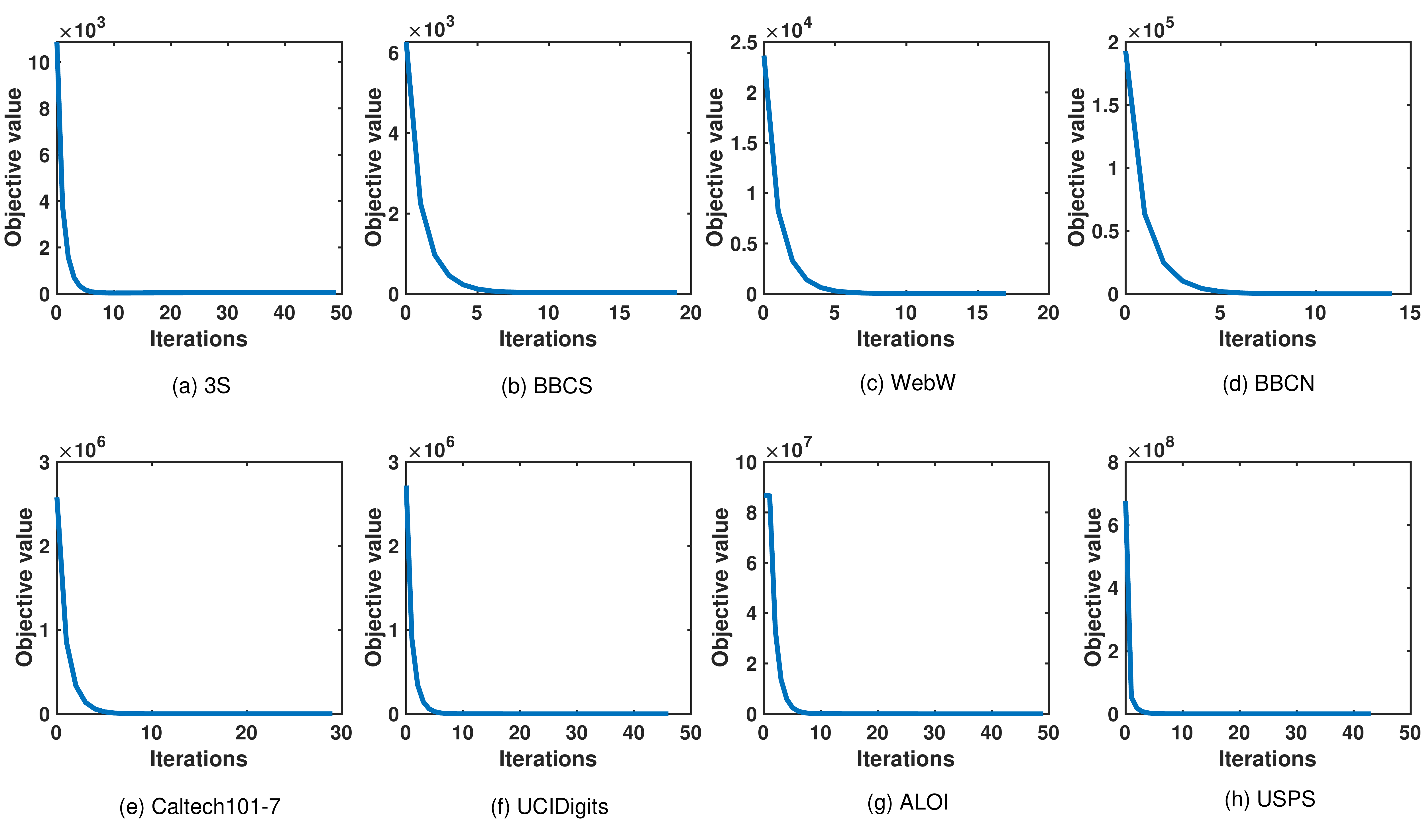}
\end{center}
\caption{Convergence curves of I$^2$MUFS on eight datasets.}\small \label{Convergence2}
\end{figure*}

\begin{figure*}[!htbp]
\begin{center}
\includegraphics[width=\textwidth]{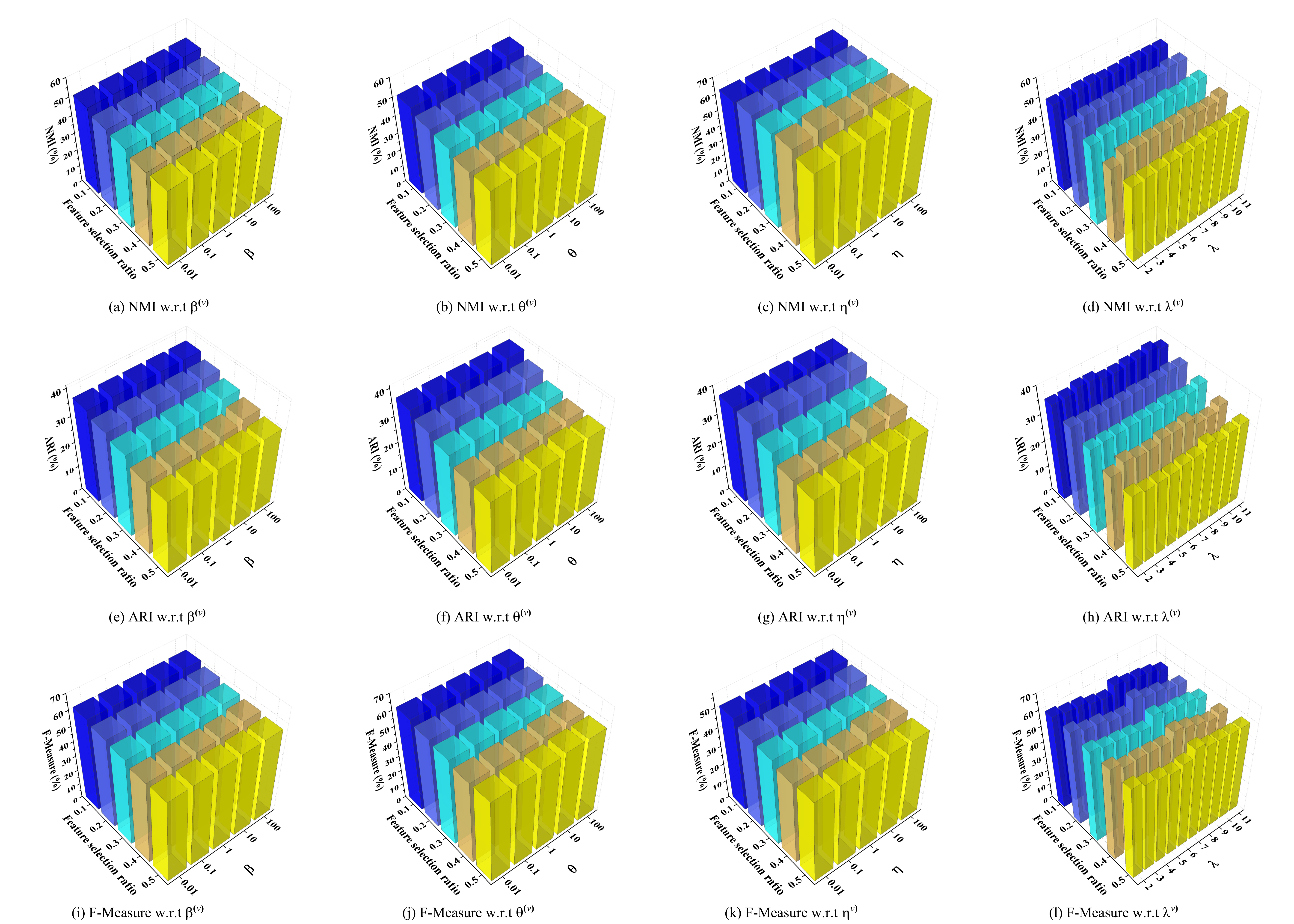}
\end{center}
\caption{NMI, ARI and F-Measure of I$^2$MUFS on Caltech101-7 dataset with different parameters.}\small \label{ParameterSensitivity}
\end{figure*}

\subsection{Ablation Study}
In this section, we conduct an ablation study to further verify the effectiveness of the extended WNMF module in the proposed method. To do so, we use the classical WNMF module~\cite{WNMF} to replace the first term (the extended WNMF module) in  Eq.~(\ref{Eq8}). Then, a variant of I$^2$MUFS called C-I$^2$MUFS is obtained as follows:

\begin{align}\label{EQ44}
&\min_{\textbf{U}_{t}^{(v)}, \textbf{U}_{t}^{*}, \textbf{V}^{(v)}}\sum_{v=1}^{n_v}\sum_{t=1}^{T} (   \| (\textbf{X}_{t}^{(v)}-\textbf{V}^{(v)}{\textbf{U}_{t}^{(v)}}^{T})\textbf{A}_{t}^{(v)}  \|_{F}^{2}  +\beta^{(v)} \| \textbf{W}_{t}^{(v)}(\textbf{U}_{t}^{(v)}-\textbf{U}_{t}^{*})  \|_{F}^{2} +
\theta^{(v)}Tr({\textbf{U}_{t}^{(v)}}^{T}\textbf{L}_{t}^{(v)}\textbf{U}_{t}^{(v)}) )  +\eta^{(v)} \| \textbf{V}^{(v)}  \|_{2,1} \notag   \\
&s.t. \textbf{U}_{t}^{(v)}\ge 0, \textbf{V}^{(v)}\ge 0, \textbf{U}_{t}^{*} \ge 0, {\textbf{U}_t^{(v)}}^{T}\textbf{U}_t^{(v)}=\textbf{I} ,\alpha^{(v)} \ge 0, \sum_{v=1}^{n_v}\alpha^{(v)}=1,
\end{align}
where the $j$-th diagonal entry of the diagonal matrix $\textbf{A}_{t}^{(v)}$ equals to 1, if the $j$-th instance is in the $v$-th view, 0 otherwise.

Table~\ref{Table6} shows the ablation experiment results. From this table, we can see that the proposed method I$^2$MUFS is superior to its variant C-I$^2$MUFS  on all data-sets in terms of NMI, ARI and F-Measure. The notable results demonstrate the extended WNMF module considering both the quality of available information until now and the fusion of views with adaptive weights can improve the performance of incomplete multi-view unsupervised feature selection.

\begin{table*}[!htbp]
\centering
\caption{NMI, ARI and F-Measure values over C-I$^2$MUFS and I$^2$MUFS with different feature ratios}\label{Table6}
\vspace{-0.2cm}
\resizebox{\textwidth}{!}{
\renewcommand\tabcolsep{5 pt}
%\small
\begin{tabular}{ccccccccccccccccccc}
\toprule[1pt]
\multirow{2}*{Metrics} & \multirow{2}*{RFS} & \multicolumn{2}{c}{3S}&\multicolumn{2}{c}{BBCS} &\multicolumn{2}{c}{WebW}&\multicolumn{2}{c}{BBCN} &\multicolumn{2}{c}{Caltech101-7}&\multicolumn{2}{c}{UCIDigits} &\multicolumn{2}{c}{ALOI} &\multicolumn{2}{c}{USPS}& \\
\cmidrule(r){3-4} \cmidrule(r){5-6} \cmidrule(r){7-8} \cmidrule(r){9-10} \cmidrule(r){11-12} \cmidrule(r){13-14} \cmidrule(r){15-16} \cmidrule(r){17-18}
~&~&I$^2$MUFS&C-I$^2$MUFS&I$^2$MUFS&C-I$^2$MUFS&I$^2$MUFS&C-I$^2$MUFS&I$^2$MUFS&C-I$^2$MUFS&I$^2$MUFS&C-I$^2$MUFS&I$^2$MUFS&C-I$^2$MUFS&I$^2$MUFS&C-I$^2$MUFS&I$^2$MUFS&C-I$^2$MUFS&\\
\midrule[0.5pt]
\multirow{5}*{NMI}
~& 0.1 &\textbf{0.5974} & 0.5655$\bullet$ &\textbf{0.6314} &0.5825 $\bullet$&\textbf{0.3008} & 0.2165$\bullet$&\textbf{0.5449} &0.5196 $\bullet$&\textbf{0.5188} & 0.4102$\bullet$&\textbf{0.3955} &0.3693 $\bullet$&\textbf{0.5330} & 0.4899$\bullet$&\textbf{0.1555} &0.1508 $\bullet$  \\
~& 0.2 &\textbf{0.6127} & 0.5713$\bullet$&\textbf{0.6667} &0.6418$\bullet$ &\textbf{0.3213} & 0.2565$\bullet$&\textbf{0.5794} & 0.5361$\bullet$&\textbf{0.5183} & 0.4384$\bullet$&\textbf{0.4387} &0.4103 $\bullet$&\textbf{0.5303} & 0.4980$\bullet$&\textbf{0.1839} &0.1617 $\bullet$\\
~& 0.3 &\textbf{0.6066} & 0.5737$\bullet$ &\textbf{0.6840} &0.6683 $\bullet$&\textbf{0.3502} & 0.2620$\bullet$&\textbf{0.6046} &0.5746$\bullet$ &\textbf{0.5004} & 0.4479$\bullet$&\textbf{0.4577} &0.4323$\bullet$ &\textbf{0.5468} & 0.5049$\bullet$&\textbf{0.1933} &0.1708 $\bullet$  \\
~& 0.4 &\textbf{0.6161} & 0.5843$\bullet$ &\textbf{0.6842} &0.6652 $\bullet$&\textbf{0.3582} & 0.3201$\bullet$&\textbf{0.6277} & 0.5716$\bullet$&\textbf{0.4839} & 0.4596$\bullet$&\textbf{0.4536} & 0.4394$\bullet$&\textbf{0.5497} & 0.5110$\bullet$&\textbf{0.1997} & 0.1702 $\bullet$ \\
~& 0.5 &\textbf{0.6114} & 0.5874$\bullet$ &\textbf{0.6749} &0.6528$\bullet$ &\textbf{0.3468} & 0.3070$\bullet$&\textbf{0.6313} & 0.6101$\bullet$&\textbf{0.4814} & 0.4413$\bullet$&\textbf{0.4550} & 0.4308$\bullet$&\textbf{0.5467} & 0.5063$\bullet$&\textbf{0.2190} &0.1717 $\bullet$  \\
\bottomrule[1pt]

\multirow{5}*{ARI}
~& 0.1 &\textbf{0.5135} & 0.4973$\bullet$ &\textbf{0.6034} & 0.5357$\bullet$&\textbf{0.3699} & 0.2963$\bullet$&\textbf{0.5171} &0.4692 $\bullet$&\textbf{0.3805} & 0.2572$\bullet$&\textbf{0.2709} &0.2535 $\bullet$&\textbf{0.1629} & 0.1334$\bullet$&\textbf{0.0842} &0.0459$\bullet$   \\
~& 0.2 &\textbf{0.5372} & 0.5130$\bullet$&\textbf{0.6417} &0.6086 $\bullet$&\textbf{0.3896} & 0.3317$\bullet$&\textbf{0.5598} & 0.5069$\bullet$&\textbf{0.3590} & 0.2749$\bullet$&\textbf{0.2980} &0.2681 $\bullet$&\textbf{0.1972} & 0.1392$\bullet$&\textbf{0.0896} &0.0511$\bullet$  \\
~& 0.3 &\textbf{0.5539} & 0.5233$\bullet$ &\textbf{0.6574} &0.6218$\bullet$ &\textbf{0.4237} & 0.3382$\bullet$&\textbf{0.5910} & 0.5520$\bullet$&\textbf{0.3422} & 0.2830$\bullet$&\textbf{0.3176} &0.2810$\bullet$ &\textbf{0.2104} & 0.1467$\bullet$&\textbf{0.0935} &0.0709 $\bullet$  \\
~& 0.4 &\textbf{0.5439} & 0.5252$\bullet$ &\textbf{0.6418} &0.6017 $\bullet$&\textbf{0.4235} & 0.3877$\bullet$&\textbf{0.6324} & 0.5051$\bullet$&\textbf{0.3339} & 0.2917$\bullet$&\textbf{0.3116} & 0.2891$\bullet$&\textbf{0.2125} & 0.1487$\bullet$&\textbf{0.0977} & 0.0791$\bullet$ \\
~& 0.5 &\textbf{0.5692} & 0.5028$\bullet$ &\textbf{0.6459} &0.6185 $\bullet$&\textbf{0.4256} & 0.3879$\bullet$&\textbf{0.6566} &0.6309 $\bullet$&\textbf{0.3098} & 0.2764$\bullet$&\textbf{0.3156} &0.2902$\bullet$ &\textbf{0.2092} & 0.1446$\bullet$&\textbf{0.1185} &0.0824 $\bullet$  \\
\bottomrule[1pt]

\multirow{5}*{F-Measure}
~& 0.1 &\textbf{0.7037} & 0.6748$\bullet$ &\textbf{0.7991} & 0.7519$\bullet$&\textbf{0.6970} & 0.6545$\bullet$&\textbf{0.7412} &0.7153$\bullet$ &\textbf{0.6282} & 0.5160$\bullet$&\textbf{0.4796} &0.4066 $\bullet$&\textbf{0.4068} & 0.3630$\bullet$&\textbf{0.3320} &0.2707 $\bullet$ \\
~& 0.2 &\textbf{0.7177} & 0.6908$\bullet$ &\textbf{0.8068} &0.7612$\bullet$ &\textbf{0.7027} & 0.6690$\bullet$&\textbf{0.7740} &0.7497 $\bullet$&\textbf{0.6070} & 0.5337$\bullet$&\textbf{0.4845} &0.4378$\bullet$ &\textbf{0.3711} & 0.3502$\bullet$&\textbf{0.3219} &0.2823 $\bullet$  \\
~& 0.3 &\textbf{0.7215} & 0.6987$\bullet$ &\textbf{0.8219} &0.7703$\bullet$ &\textbf{0.7206} & 0.6884$\bullet$&\textbf{0.7872} &0.7503$\bullet$ &\textbf{0.5897} & 0.5476$\bullet$&\textbf{0.4992} &0.4523 $\bullet$&\textbf{0.3794} & 0.3483$\bullet$&\textbf{0.3465} &0.2947 $\bullet$  \\
~& 0.4 &\textbf{0.7277} & 0.7043$\bullet$ &\textbf{0.8176} &0.7873 $\bullet$&\textbf{0.7283} & 0.7063$\bullet$&\textbf{0.8012} & 0.7495$\bullet$&\textbf{0.5832} & 0.5524$\bullet$&\textbf{0.4697} & 0.4370$\bullet$&\textbf{0.3748} & 0.3435$\bullet$&\textbf{0.3419} & 0.2853 $\bullet$ \\
~& 0.5 &\textbf{0.7389} & 0.6944$\bullet$ &\textbf{0.8169} &0.7909$\bullet$ &\textbf{0.7276} & 0.7064$\bullet$&\textbf{0.8096} & 0.7306$\bullet$&\textbf{0.5783} & 0.5491$\bullet$&\textbf{0.4681} & 0.4322$\bullet$&\textbf{0.3730} & 0.3521$\bullet$&\textbf{0.3516} &0.2823 $\bullet$  \\
\bottomrule[1pt]
\end{tabular}
}
\end{table*}

\section{Conclusion}
In this paper, we proposed a novel multi-view unsupervised feature selection method I$^2$MUFS to deal with the incomplete multi-view streaming data. I$^2$MUFS learns the common clustering indicator matrix and fuses the latent feature matrices with adaptive view weights by simultaneously considering the consistency and complementarity between views. Besides, by the utilization of incremental learning mechanisms, we developed an alternative iterative optimization algorithm to improve the efficiency of feature selection. Experimental results on several real-world datasets show that I$^2$MUFS outperforms the SOTA unsupervised feature selection methods in terms of the clustering metrics and the running time.  Although the proposed I$^2$MUFS obtains the best performance on ALOI and USPS datasets in terms of ARI, it gets relatively low performance compared to other datasets. Hence,  future work will try to extend our method to improve the performance on these two datasets. In addition, even though it is hard to obtain a lot of labeled instances, we may collect a little labeled data in real applications. In the future work, we will also investigate the semi-supervised feature selection method for the dynamic incomplete multi-view data.

\section*{Acknowledgments}
This work was supported by the  Youth Fund Project of Humanities and Social Science Research of Ministry of Education (No. 21YJCZH045), National Science Foundation of China (Nos. 61773324, 62076171), the Fundamental Research Funds for the Central Universities (No. JBK2101001), the Natural Science Foundation of  Fujian Province (No. 2020J01800) and the Joint Lab of Data Science and Business Intelligence at Southwestern University of Finance and Economics.


\begin{thebibliography}{10}
\expandafter\ifx\csname url\endcsname\relax
  \def\url#1{\texttt{#1}}\fi
\expandafter\ifx\csname urlprefix\endcsname\relax\def\urlprefix{URL }\fi
\expandafter\ifx\csname href\endcsname\relax
  \def\href#1#2{#2} \def\path#1{#1}\fi

\bibitem{rubino20173d}
C.~Rubino, M.~Crocco, A.~Del~Bue, {3D} object localisation from multi-view
  image detections, IEEE Transactions on Pattern Analysis and Machine
  Intelligence 40~(6) (2017) 1281--1294.

\bibitem{kim2010multi}
Y.-M. Kim, M.-R. Amini, C.~Goutte, P.~Gallinari, Multi-view clustering of
  multilingual documents, in: Proceedings of the 33rd International ACM SIGIR
  Conference on Research and Development in Information Retrieval, 2010, pp.
  821--822.

\bibitem{cai2010unsupervised}
D.~Cai, C.~Zhang, X.~He, Unsupervised feature selection for multi-cluster data,
  in: Proceedings of the 16th ACM SIGKDD International Conference on Knowledge
  Discovery and Data Mining, 2010, pp. 333--342.

\bibitem{Tang:Unsupervised}
J.~Tang, X.~Hu, H.~Gao, H.~Liu, Unsupervised feature selection for multi-view
  data in social media, in: Proceedings of the 2013 SIAM International
  Conference on Data Mining, 2013, pp. 270--278.

\bibitem{bellman:dynamic}
R.~Bellman, Dynamic programming, Science 153~(3731) (1966) 34--37.

\bibitem{zhang2019unsupervised}
R.~Zhang, F.~Nie, Y.~Wang, X.~Li, Unsupervised feature selection via adaptive
  multimeasure fusion, IEEE Transactions on Neural Networks and Learning
  Systems 30~(9) (2019) 2886--2892.

\bibitem{FAN2021MultiLabel}
Y.~Fan, J.~Liu, W.~Weng, B.~Chen, Y.~Chen, S.~Wu, Multi-label feature selection
  with constraint regression and adaptive spectral graph, Knowledge-Based
  Systems 212 (2021) 106621.



\bibitem{li2012NDFS}
Z.~Li, Y.~Yang, J.~Liu, X.~Zhou, H.~Lu, Unsupervised feature selection using
  nonnegative spectral analysis, in: Proceedings of the 26th AAAI Conference on
  Artificial Intelligence, 2012, pp. 1026--1032.

\bibitem{UDFS}
Y.~Yang, H.~Shen, Z.~Ma, Z.~Huang, X.~Zhou, $\ell$2, 1-norm regularized
  discriminative feature selection for unsupervised learning, in: Proceedings
  of the 22nd International Joint Conference on Artificial Intelligence, 2011,
  pp. 1589--1594.

\bibitem{Rui2019Feature}
R.~Zhang, F.~Nie, X.~Li, X.~Wei, Feature selection with multi-view data: A
  survey, Information Fusion.

\bibitem{Sun2018Multi}
S.~Sun, Y.~Wan, C.~Zeng, Multi-view embedding with adaptive shared output and
  similarity for unsupervised feature selection, Knowledge-Based Systems 165
  (2018) 40--52.

\bibitem{wang2013multi}
H.~Wang, F.~Nie, H.~Huang, Multi-view clustering and feature learning via
  structured sparsity, in: International Conference on Machine Learning, 2013,
  pp. 352--360.

\bibitem{sampleAppears}
C.~Xu, D.~Tao, C.~Xu, Multi-view learning with incomplete views, IEEE
  Transactions on Image Processing 24~(12) (2015) 5812--5825.

\bibitem{Wen2019UnifiedEA}
J.~Wen, Z.~Zhang, Y.~Xu, B.~Zhang, L.~Fei, H.~Liu, Unified embedding alignment
  with missing views inferring for incomplete multi-view clustering, in:
  Proceedings of the 33rd AAAI Conference on Artificial Intelligence, 2019, pp.
  5393--5400.


\bibitem{FSBook2015}
V.~Bol\'{o}n-Candedo, N.~S\'{a}nchez-Maro\~{n}o, A. Alonso-Betanzos, Feature selection for high-dimensional data, Springer, 2015.


\bibitem{FSBook2008}
R.~Todeschini, V.~Consonni, Handbook of molecular descriptors, John Wiley and Sons, 2008.

\bibitem{FS2020}
J.L. Tang, S. Alelyani, H. Liu, Feature selection for classification: A review, Data classification: Algorithms and applications (2014) 37-70.


\bibitem{FS2017}
R. Sheikhpour, M.A. Sarram, S. Gharaghani, M.A. ZareChahooki, A survey on semi-supervised feature selection methods, Pattern Recognition 64 (2017) 141-158.


\bibitem{S2019A}
S.~Solorio-Fern\'{a}ndez, J.~A. Carrasco-Ochoa, J.~Mart\'{i}nez-Trinidad, A review of
  unsupervised feature selection methods, Artificial Intelligence Review 53
  (2019) 907--948.



\bibitem{He:Laplacian}
X.~He, D.~Cai, P.~Niyogi, Laplacian score for feature selection, in:
  Proceedings of the 18th International Conference on Neural Information
  Processing Systems, 2005, pp. 507--514.

\bibitem{gu2012generalized}
Q.~Gu, Z.~Li, J.~Han, Generalized fisher score for feature selection, in:
  Proceedings of the Twenty-Seventh Conference on Uncertainty in Artificial
  Intelligence, 2011, pp. 266--273.

\bibitem{Dy2004Feature}
J.~G. Dy, C.~E. Brodley, Feature selection for unsupervised learning, Journal
  of machine learning research 5~(8) (2004) 845--889.

\bibitem{Maldonado2009wrapper}
M.~S, W.~R, A wrapper method for feature selection using support vector
  machines, Information Sciences 179~(13) (2009) 2208--2217.

\bibitem{Hou2013Joint}
C.~Hou, F.~Nie, X.~Li, D.~Yi, W.~Yi, Joint embedding learning and sparse
  regression: A framework for unsupervised feature selection, IEEE Transactions
  on Cybernetics 44~(6).

\bibitem{Liu2020Robust}
Y.~Liu, D.~Ye, W.~Li, H.~Wang, Y.~Gao, Robust neighborhood embedding for
  unsupervised feature selection, Knowledge-Based Systems 193 (2020) 105462.

\bibitem{RSFS}
L.~Shi, L.~Du, Y.~Shen, Robust spectral learning for unsupervised feature
  selection, in: 2014 IEEE International Conference on Data Mining, 2014, pp.
  977--982.

\bibitem{Hou:Multi}
C.~Hou, F.~Nie, H.~Tao, D.~Yi, Multi-view unsupervised feature selection with
  adaptive similarity and view weight, IEEE Transactions on Knowledge and Data
  Engineering 29~(9) (2017) 1998--2011.

\bibitem{ACSL}
X.~Dong, L.~Zhu, X.~Song, J.~Li, Z.~Cheng, Adaptive collaborative similarity
  learning for unsupervised multi-view feature selection, in: Proceedings of
  the 27th International Joint Conference on Artificial Intelligence, 2018, pp.
  2064--2070.

\bibitem{CGMV-UFS}
C.~Tang, J.~Chen, X.~Liu, M.~Li, P.~Wang, M.~Wang, P.~Lu, Consensus learning
  guided multi-view unsupervised feature selection, Knowledge-Based Systems 160
  (2018) 49--60.

\bibitem{ASE}
Y.~Wan, S.~Sun, C.~Zeng, Adaptive similarity embedding for unsupervised
  multi-view feature selection, IEEE Transactions on Knowledge and Data
  Engineering 33~(10) (2020) 3338--3350.

\bibitem{CoUFC}
L.~Zhao, T.~Yang, J.~Zhang, Z.~Chen, Y.~Yang, Z.~J. Wang, Co-learning
  non-negative correlated and uncorrelated features for multi-view data, IEEE
  Transactions on Neural Networks and Learning Systems 32~(4) (2020)
  1486--1496.

\bibitem{OMVFS}
W.~Shao, L.~He, C.~Lu, X.~Wei, S.~Y. Philip, Online unsupervised multi-view
  feature selection, in: 2016 IEEE International Conference on Data Mining,
  2016, pp. 1203--1208.

\bibitem{NIPS2000nmf}
D.~D. Lee, H.~S. Seung, Algorithms for non-negative matrix factorization, in:
  Advances in Neural Information Processing Systems, 2000, pp. 556--562.



\bibitem{WNMF}
Y.-D. Kim, S.~Choi, Weighted nonnegative matrix factorization, in: 2009 IEEE
  International Conference on Acoustics, Speech and Signal Processing, 2009,
  pp. 1541--1544.

\bibitem{de2011handbook}
T.~De~Waal, J.~Pannekoek, S.~Scholtus, Handbook of statistical data editing and
  imputation, John Wiley \& Sons, 2011.

\bibitem{wang2015embedded}
S.~Wang, J.~Tang, H.~Liu, Embedded unsupervised feature selection, in:
  Proceedings of the 29th AAAI Conference on Artificial Intelligence, 2015, pp.
  470--476.

\bibitem{li2013clustering}
Z.~Li, J.~Liu, Y.~Yang, X.~Zhou, H.~Lu, Clustering-guided sparse structural
  learning for unsupervised feature selection, IEEE Transactions on Knowledge
  and Data Engineering 26~(9) (2013) 2138--2150.

\bibitem{lee2001algorithms}
D.~D. Lee, H.~S. Seung, Algorithms for non-negative matrix factorization, in:
  Advances in Neural Information Processing Systems, 2001, pp. 556--562.

\bibitem{li2018generalized}
X.~Li, H.~Zhang, R.~Zhang, Y.~Liu, F.~Nie, Generalized uncorrelated regression
  with adaptive graph for unsupervised feature selection, IEEE Transactions on
  Neural Networks and Learning Systems 30~(5) (2018) 1587--1595.

\bibitem{ding2008convex}
C.~Ding, T.~Li, M.~I. Jordan, Convex and semi-nonnegative matrix
  factorizations, IEEE Transactions on Pattern Analysis and Machine
  Intelligence 32~(1) (2008) 45--55.

\bibitem{Wen2020Incomplete}
J.~Wen, Y.~Xu, H.~Liu, Incomplete multiview spectral clustering with adaptive
  graph learning, IEEE Transactions on Cybernetics 50~(4) (2020) 1418--1429.

\bibitem{Wu2019Essential}
J.~Wu, Z.~Lin, H.~Zha, Essential tensor learning for multi-view spectral
  clustering, IEEE Transactions on Image Processing 28~(12) (2019) 5910--5922.

\bibitem{LiuRNE}
Y.~Liu, D.~Ye, W.~Li, H.~Wang, Y.~Gao, Robust neighborhood embedding for
  unsupervised feature selection, Knowledge-Based Systems 193 (2020) 105462.

\bibitem{GIMC_FLSD}
J.~Wen, Z.~Zhang, Z.~Zhang, L.~Fei, M.~Wang, Generalized incomplete multiview
  clustering with flexible locality structure diffusion, IEEE Transactions on
  Cybernetics.

\bibitem{speedup}
J.~Zhang, T.~Li, H.~Chen, Composite rough sets for dynamic data mining,
  Information Sciences 257 (2014) 81--100.

\end{thebibliography}
\end{document}